\def\eqref#1{equation~\ref{#1}}
\def\1{\bm{1}}
\def\vx{{\bm{x}}}
\def\vz{{\bm{z}}}
\DeclareMathAlphabet{\mathsfit}{\encodingdefault}{\sfdefault}{m}{sl}
\SetMathAlphabet{\mathsfit}{bold}{\encodingdefault}{\sfdefault}{bx}{n}
\def\sR{{\mathbb{R}}}
\newcommand{\E}{\mathbb{E}}
\DeclareMathOperator*{\argmax}{arg\,max}
\newtheorem{theorem}{Theorem}[section]
\newtheorem{proposition}[theorem]{Proposition}
\newtheorem{lemma}[theorem]{Lemma}
\newtheorem{definition}[theorem]{Definition}
\newtheorem{example}[theorem]{Example}
\newcommand{\bpropref}[1]{\textbf{Proposition \ref{#1}}}
\newcommand{\bdefref}[1]{\textbf{Definition \ref{#1}}}
\newcommand{\bexref}[1]{\textbf{Example \ref{#1}}}
\def\cC{{\mathcal{C}}}
\def\cD{{\mathcal{D}}}
\def\cF{{\mathcal{F}}}
\def\cH{{\mathcal{H}}}
\def\cX{{\mathcal{X}}}
\def\cY{{\mathcal{Y}}}
\def\cZ{{\mathcal{Z}}}
\newcommand{\norm}[1]{\lVert #1 \rVert}
\newcommand{\normv}[1]{\frac{#1}{\lVert #1 \rVert}}
\def\iid{{\overset{\text{i.i.d.}}{\sim}}}
\def\ecsim{{\hat{\gamma}_{f, \vx^e, \cD_{foil}}^c}}
\def\ccsim{{\gamma_{f, \cC, \cD_{foil}}}}
\def\eccsim{{\hat{\gamma}_{f, \cC, \cF}}}
\title{Contrastive Corpus Attribution for Explaining Representations}
\author{Chris Lin\thanks{Equal contribution.}{ }{ }, { }Hugh Chen$^*$, Chanwoo Kim, Su-In Lee \\
Paul G. Allen School of Computer Science \& Engineering \\
University of Washington \\
Seattle, WA 98195, USA \\
\texttt{\{clin25,hughchen,chanwkim,suinlee\}@cs.washington.edu}
}
\begin{document}

\maketitle

\begin{abstract}
Despite the widespread use of unsupervised models, very few methods are designed to explain them.  Most explanation methods explain a scalar model output. However, unsupervised models output representation vectors, the elements of which are not good candidates to explain because they lack semantic meaning.  To bridge this gap, recent works defined a scalar explanation output: a dot product-based similarity in the representation space to the sample being explained (i.e., an explicand).  Although this enabled explanations of unsupervised models, the interpretation of this approach can still be opaque because similarity to the explicand's representation may not be meaningful to humans. To address this, we propose \textit{contrastive corpus similarity}, a novel and semantically meaningful scalar explanation output based on a reference \textit{corpus} and a contrasting \textit{foil} set of samples.  We demonstrate that contrastive corpus similarity is compatible with many post-hoc feature attribution methods to generate \textit{COntrastive COrpus Attributions (COCOA)} and quantitatively verify that features important to the corpus are identified. We showcase the utility of COCOA in two ways: (i) we draw insights by explaining augmentations of the same image in a contrastive learning setting (SimCLR); and (ii) we perform zero-shot object localization by explaining the similarity of image representations to jointly learned text representations (CLIP).
\end{abstract}

\section{Introduction}
\label{sec:intro}

Machine learning models based on deep neural networks are increasingly used in a diverse set of tasks including chess~\citep{silver2018general}, protein folding~\citep{jumper2021highly}, and language translation~\citep{jean2014using}.  The majority of neural networks have many parameters, which impedes humans from understanding them  \citep{lipton2018mythos}.  To address this, many tools have been developed to understand supervised models in terms of their prediction \citep{lundberg2017unified,wachter2017counterfactual}.  In this supervised setting, the model maps features to labels ($f:\cX\to\cY$), and explanations aim to understand the model's prediction of a label of interest.  These explanations are interpretable, because the label of interest (e.g., mortality, an image class) is meaningful to humans (Figure \ref{fig:concept}a).

In contrast, models trained in unsupervised settings map features to representations ($f:\cX\to\cH$).  Existing supervised explanation methods can be applied to understand an individual element ($h_i$) in the representation space, but such explanations are not useful to humans unless $h_i$ has a natural semantic meaning.  Unfortunately, the meaning of individual elements in the representation space is unknown in general.  One possible solution is to enforce representations to have semantic meaning as in \citet{koh2020concept}, but this approach requires concept labels for every single training sample, which is typically impractical.  Another solution is to enforce learned representations to be disentangled as in \citet{tran2017disentangled} and then manually identify semantically meaningful elements to explain, but this approach is not post-hoc and requires potentially undesirable modifications to the training process.

\textbf{Related work.} Rather than explain a single element in the representation, approaches based on explaining the representation as a whole have recently been proposed, including RELAX \citep{wickstrom2021relax} and label-free feature importance \citep{crabbe2022label} (Figure \ref{fig:concept}b) (additional related work in Appendix \ref{sec:related}). These approaches both aim to identify features in the explicand (the sample to explain) that, when removed, point the altered representation away from the explicand’s original representation.

Although RELAX and label-free feature importance successfully extend existing explanation techniques to an unsupervised setting, they have two major limitations.  First, they only consider similarity to the explicand's representation; however, there are a variety of other meaningful questions that can be asked by examining similarity to other samples' representations.  Examples include asking, ``Why is my explicand similar to dog images?'' or ``How is my rotation augmented image similar to my original image?''.  Second, RELAX and label-free importance find features which increase similarity to the explicand in representation space from any direction, but in practice some of these directions may not be meaningful.  Instead, just as human perception often explains by comparing against a contrastive counterpart (i.e., \textit{foil}) \citep{kahneman1986norm, lipton_1990, miller2019explanation}, we may wish to find features that move toward the explicand relative to an explicit ``foil''.  As an example, RELAX and label-free importance may identify features which increase similarity to a dog explicand image relative to other dog images or even cat images; however, they may also identify features which increase similarity relative to noise in the representation space corresponding to unmeaningful out-of-distribution samples.  In contrast, we can use foil samples to ask specific questions such as, ``What features increase similarity to my explicand relative to cat images?''.

\textbf{Contribution.} (1) To address the limitations of prior works on explaining unsupervised models, we introduce \textit{COntrastive COrpus Attribution (COCOA)}, which 
allows users to choose \textit{corpus} and \textit{foil} samples in order to ask, ``What features make my explicand's representation similar to my corpus, but dissimilar to my foil?'' (Figure \ref{fig:concept}c).
(2) We apply COCOA to representations learned by a self-supervised contrastive learning model and observe class-preserving features in image augmentations. (3) We perform object localization by explaining a mixed modality model with COCOA.

\textbf{Motivation.}  Unsupervised models are prevalent and can learn effective representations for downstream classification tasks.  Notable examples include contrastive learning \citep{chen2020simple} and self-supervised learning \citep{grill2020bootstrap}.  Despite their widespread use and applicability, unsupervised models are largely opaque. Explaining them can help researchers understand and therefore better develop and compare representation learning methods \citep{wickstrom2021relax, crabbe2022label}.  In deployment, explanations can help users better monitor and debug these models \citep{bhatt2020explainable}.  

Moreover, COCOA is beneficial even in a supervised setting.  Existing feature attribution methods only explain the classes the model has been trained to predict, so they can only explain classes which are fully labeled in the training set.  Instead, COCOA only requires a few class labels after the training process is complete, so it can be used more flexibly.  For instance, if we train a supervised model on CIFAR-10, existing methods can only explain the ten classes the model was trained on.  Instead, we can collect new samples from an unseen class, and apply COCOA to a representation layer of the trained model to understand this new class.

\section{Notation}\label{subsec:notations}
We consider an arbitrary input space $\cX$ and output space $\cZ$. Given a model $f: \cX \rightarrow \cZ$ and an explicand $\vx^e \in \cX$ to be explained, local feature attribution methods assign a score to each input feature based on the feature's importance to a scalar $\textit{explanation target}$. A model's intermediate or final output is usually not a scalar. Hence, feature attribution methods require an $\textit{explanation target function}$ $\gamma_{f, *}: \cX \rightarrow \sR$ that transforms a model's behavior into an explanation target. The subscript $f$ indicates that a model is considered a fixed parameter of an explanation target function, and $*$ denotes an arbitrary number of additional parameters. To make this concrete, consider the following example with a classifier $f^{class}: \cX \rightarrow [0, 1]^C$, where $C$ is the number of classes.

\begin{example}\label{ex:target_func_explicand_idx}
    Given an explicand $\vx^e \in \cX$, let the explanation target be the predicted probability of the explicand's predicted class. Then the explanation target function is
    \begin{equation*}
        \gamma_{f^{class}, \vx^e}(\vx) = f^{class}_{\argmax_{j = 1, ..., C} f^{class}_j(\vx^e)}(\vx),
    \end{equation*}
    for all $\vx \in \cX$, where $f^{class}_i(\cdot)$ denotes the $i$th element of $f^{class}(\cdot)$. Here, the explanation target function has the additional subscript $\vx^e$ to indicate that the explicand is a fixed parameter.
\end{example}
Let $\sR^{\cX}$ denote the set of functions that map from $\cX$ to $\sR$. Formally, a local feature attribution method $\phi: \sR^{\cX} \times \cX \rightarrow \sR^{|\cX|}$ takes an explanation target function $\gamma_{f, *}$ and an explicand $\vx^e$ as inputs. The attribution method returns $\phi(\gamma_{f, *}, \vx^e) \in \sR^{|\cX|}$ as feature importance scores. For $k = 1, ..., |\cX|$, let $\vx^e_k$ denote the $k$th feature of $\vx^e$ and $\phi_k(\gamma_{f, *}, \vx^e)$ the corresponding feature importance of $\vx^e_k$. We demonstrate this definition of local feature attribution methods with the following example.
\begin{example}\label{ex:def_attr_vanilla_grad}
    Given an explicand $\vx^e$, the above classifier $f^{class}$, and the explanation target function in \bexref{ex:target_func_explicand_idx}, consider Vanilla Gradients \citep{simonyan2013deep} as the local feature attribution method. The feature importance scores of $\vx^e$ are computed as
    \begin{equation*}
        \phi(\gamma_{f^{class}, \vx^e}, \vx^e) = \nabla_{\vx} \gamma_{f^{class}, \vx^e}(\vx) \Big\vert_{\vx = \vx^e} = \nabla_{\vx} f^{class}_{\argmax_{j = 1, ..., C} f^{class}_j(\vx^e)}(\vx) \Big\vert_{\vx = \vx^e}.
    \end{equation*}
\end{example}

\begin{figure}[ht]
\centering
\includegraphics[width=0.65\textwidth]{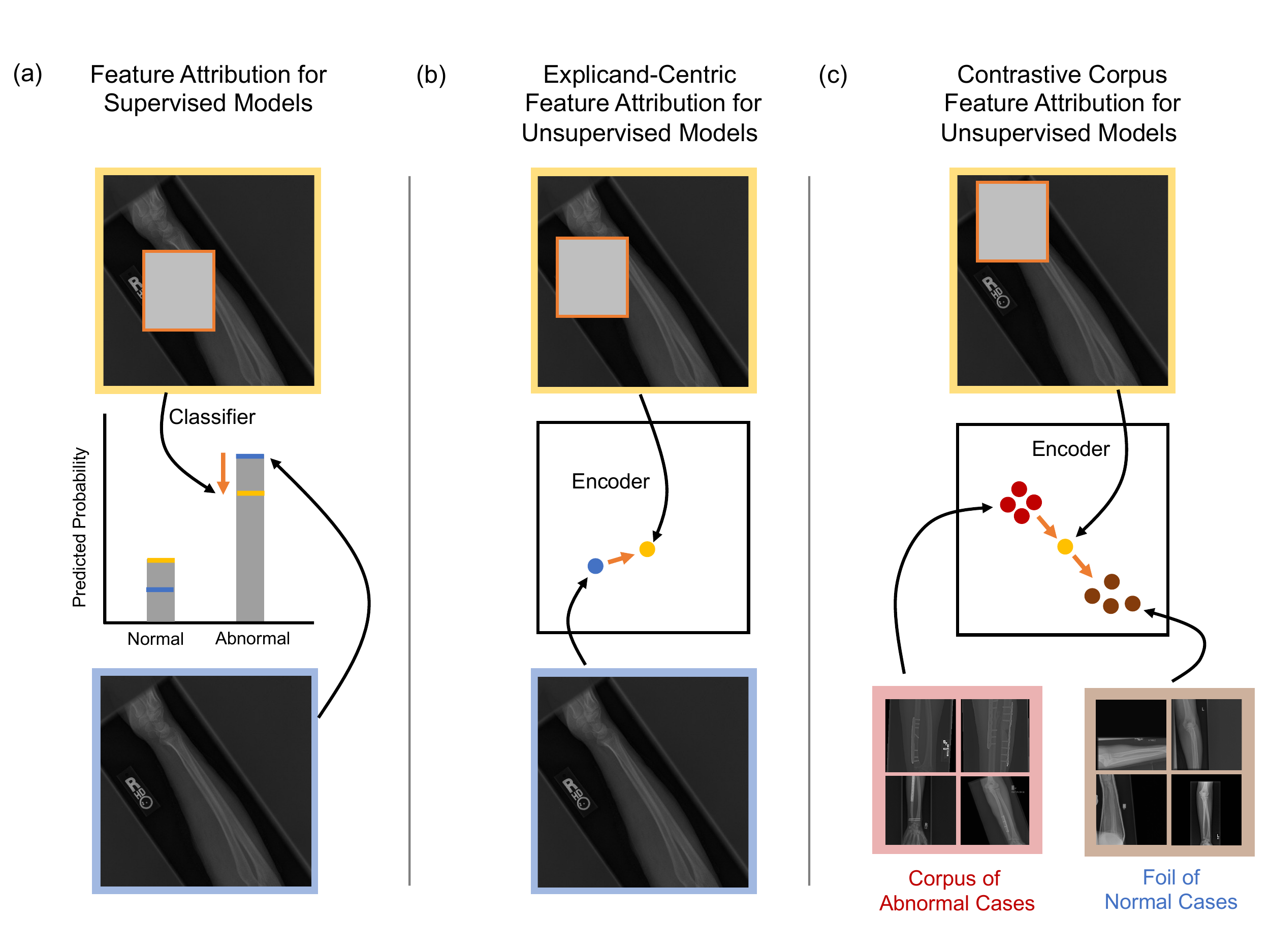}
\caption{Illustration of feature attribution approaches in different settings. (a) Feature attribution for a supervised model (e.g., an X-ray image classifier for bone abnormality) identifies features that, when removed, decrease the predicted probability of a class of interest (e.g., abnormal). (b) Explicand-centric feature attribution (i.e., RELAX and label-free feature importance) for an unsupervised model identifies important features that, when removed, make the representation dissimilar to the explicand's original representation. (c) Contrastive corpus attribution for an unsupervised model identifies important features that, when removed, make the representation similar to a foil set (e.g., normal cases) and dissimilar to a corpus set (e.g., abnormal cases).}
\label{fig:concept}
\end{figure}

\section{Approach}
\label{sec:approach}

In this section, we motivate and propose a novel explanation target function for local feature attributions in the setting of explaining representations learned by an encoder model. We also describe how to combine the proposed explanation target function with existing attribution methods and how to interpret the obtained attribution scores. All proofs are in Appendix \ref{sec:proofs}.

\subsection{Defining contrastive corpus similarity as an explanation target function}
\label{sec:defining_contrastive_corpus}
We first define the notion of similarity between representations. Without loss of generality, suppose the representation space is $\sR^d$ for some integer $d > 0$.
\begin{definition}[Representation similarity]\label{def:rep_sim}
    Let $f: \cX \rightarrow \sR^d$ be a representation encoder. For all $\vx, \vx' \in \cX$, the representation similarity between $\vx$ and $\vx'$ based on $f$ is
    \begin{equation}
        s_f(\vx, \vx') = \frac{f(\vx)^T f(\vx')}{\norm{f(\vx)} \norm{f(\vx')}},
    \end{equation}
    where $\norm{\cdot}$ denotes the Euclidean norm of a vector.
\end{definition}
We note that the above representation similarity is the cosine similarity of representations. Motivations for this choice are in Appendix \ref{sec:motivations_for_cosine_similarity}.

With \bdefref{def:rep_sim}, the explanation target functions for RELAX and label-free feature importance are $\gamma_{f, \vx^e}(\cdot) = s_f(\cdot, \vx^e)$ and $\gamma_{f, \vx^e}(\cdot) = s_f(\cdot, \vx^e) \norm{f(\cdot)} \norm{f(\vx^e)}$, respectively \citep{wickstrom2021relax, crabbe2022label}. Intuitively, these explanation target functions identify explicand features that, when removed or perturbed, point the altered representation away from the explicand's original representation.

As mentioned in Section \ref{sec:intro}, we address the limitations of prior works by (i) relaxing the representation reference for feature attribution to any \textit{corpus} set of samples; and (ii) enabling explicit specification of a foil to answer contrastive questions.  For generality, we define the foil to be a distribution of inputs, which can also remove the extra burden of identifying particular foil samples. Nevertheless, we acknowledge that a fixed set of foil samples can indeed be selected for tailored use cases. Overall, our proposed improvements lead to \textit{contrastive corpus similarity} as defined below.

\begin{definition}[Contrastive corpus similarity]\label{def:contrast_corpus_sim}
    Let $\cC \subset \cX$ be a finite set of corpus samples and $\cD_{foil}$ be a foil distribution over $\cX$. The contrastive corpus similarity of any $\vx \in \cX$ to $\cC$ in contrast to $\cD_{foil}$ is
    \begin{equation}
        \ccsim(\vx) = \frac{1}{|\cC|} \sum_{\vx^c \in \cC} s_f(\vx, \vx^c) - \E_{\vx^v \sim \cD_{foil}}[s_f(\vx, \vx^v)].
    \end{equation}
    Given a set of foil samples $\cF = \{\vx^{(1)}, ..., \vx^{(m)}\}$, where $\vx^{(1)}, ..., \vx^{(m)} \iid \cD_{foil}$, the empirical estimator for the contrastive corpus similarity is
    \begin{equation}
        \eccsim(\vx) = \frac{1}{|\cC|} \sum_{\vx^c \in \cC} s_f(\vx, \vx^c) - \frac{1}{m} \sum_{i=1}^m s_f(\vx, \vx^{(i)}).
    \end{equation}
\end{definition}
The unbiasedness of and a concentration bound for the empirical estimator are stated in Appendix \ref{sec:empirical_estimator_properties}. Practically, these analyses allow us to choose an empirical foil set size with theoretical justification.

\subsection{Generating and interpreting feature attributions}

As in \bexref{ex:def_attr_vanilla_grad}, given a feature attribution method $\phi: \sR^{\cX} \times \cX \rightarrow \sR^{|\cX|}$ which takes an explanation target function $\gamma_{f, *}$ and an explicand $\vx^e$ as inputs, COCOA computes attributions by explaining the empirical contrastive corpus similarity:
    \begin{equation}
        \phi(\eccsim, \vx^e)
    \end{equation}
In practice, this amounts to wrapping the representation encoder $f$ with the empirical estimator for contrastive corpus similarity and applying a feature attribution method.  In particular, there are two classes of feature attribution methods which can be used to explain contrastive corpus similarity and other similarity-based explanation target functions. The first are removal-based methods
(e.g., RISE, KernelSHAP)
and the second are gradient-based methods
(e.g., Vanilla Gradients, Integrated Gradients).  Since the explanation target functions we consider in this paper can always be evaluated and are differentiable with respect to the inputs, we can directly apply these methods without modification.  Finally, it is worth noting that methods which satisfy desirable properties \citep{sundararajan2017axiomatic,sundararajan2020many} in the supervised feature attribution setting also satisfy analogous properties for similarity-based explanation targets (Appendix \ref{sec:feature_attribution_properties}). 

To interpret feature attribution results from COCOA, consider the \textit{contrastive direction} toward the corpus representation mean and away from the foil representation mean. Features with high COCOA scores have a large impact on the explicand's representation direction pointing along this contrastive direction. This interpretation of COCOA is formalized in the following proposition.
\begin{proposition}\label{prop:empirical_mean_direction}
    The empirical estimator of contrastive corpus similarity is the dot product of the input's representation direction and the empirical contrastive direction. That is,
    \begin{equation}
        \eccsim(\vx) = \bigg(\frac{f(\vx)}{\norm{f(\vx)}}\bigg)^T \bigg(\frac{1}{|\cC|} \sum_{\vx^c \in \cC} \frac{f(\vx^c)}{\norm{f(\vx^c)}} - \frac{1}{m} \sum_{i=1}^m \frac{f(\vx^{(i)})}{\norm{f(\vx^{(i)})}}\bigg).
    \end{equation}
\end{proposition}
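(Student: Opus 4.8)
The plan is a direct computation; this proposition is an algebraic restatement rather than a substantive result. The key observation is that the representation similarity of \bdefref{def:rep_sim} is literally the inner product of the two \emph{unit-normalized} representations: for all $\vx, \vx' \in \cX$,
\[
s_f(\vx, \vx') = \frac{f(\vx)^T f(\vx')}{\norm{f(\vx)}\,\norm{f(\vx')}} = \bigg(\frac{f(\vx)}{\norm{f(\vx)}}\bigg)^{\!T} \bigg(\frac{f(\vx')}{\norm{f(\vx')}}\bigg).
\]

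First I would substitute this identity into the definition of $\eccsim(\vx)$ from \bdefref{def:contrast_corpus_sim}, rewriting each term $s_f(\vx, \vx^c)$ in the corpus average and each term $s_f(\vx, \vx^{(i)})$ in the foil average in the normalized-inner-product form above. Then I would invoke bilinearity of the Euclidean inner product — concretely, that $\vu^T(\cdot)$ is linear, so it commutes with finite sums and with scalar multiplication by $\tfrac{1}{|\cC|}$ and $\tfrac{1}{m}$ — to factor the common vector $f(\vx)/\norm{f(\vx)}$ out of both averages. What remains inside the second factor is precisely the empirical contrastive direction $\frac{1}{|\cC|}\sum_{\vx^c \in \cC} \frac{f(\vx^c)}{\norm{f(\vx^c)}} - \frac{1}{m}\sum_{i=1}^m \frac{f(\vx^{(i)})}{\norm{f(\vx^{(i)})}}$, which yields the claimed expression.

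There is no genuine obstacle here; the only point requiring care is that $f(\vx)$, the $f(\vx^c)$, and the $f(\vx^{(i)})$ are all nonzero so that the normalizations are well-defined, but this is already implicit in \bdefref{def:rep_sim}. The value of the proposition is interpretive: it exhibits that $\eccsim(\vx)$ depends on $\vx$ only through the \emph{direction} $f(\vx)/\norm{f(\vx)}$ and is maximized, over directions, exactly when that direction aligns with the empirical contrastive direction — which is what justifies reading high COCOA scores as identifying features that steer the explicand's representation along the contrastive direction toward the corpus mean and away from the foil mean.
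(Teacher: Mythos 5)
Your proposal is correct and matches the paper's own proof: both rewrite each cosine similarity $s_f(\vx,\cdot)$ as an inner product of unit-normalized representations and then use linearity of the inner product to factor $f(\vx)/\norm{f(\vx)}$ out of the corpus and foil averages. Nothing is missing; the well-definedness caveat about nonzero representations is a reasonable (and standard) implicit assumption.
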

\bpropref{prop:mean_direction} is an analogous statement for contrastive corpus similarity. An extension of \bpropref{prop:empirical_mean_direction} and \bpropref{prop:mean_direction}to general kernel functions is in Appendix \ref{sec:extend_mean_direction}. This extension implies that the directional interpretation of COCOA generally holds for kernel-based similarity functions.

\section{Experiments}
In this section, we quantitatively evaluate whether important features identified by COCOA are indeed related to the corpus and foil, using multiple datasets, encoders, and feature attribution methods (Section \ref{sec:quant_eval}). Furthermore, we demonstrate the utility of COCOA by its application to understanding image data augmentations (Section \ref{sec:data_aug}) and to mixed modality object localization (Section \ref{sec:clip}).

\subsection{Quantitative evaluation}
\label{sec:quant_eval}
To evaluate COCOA across different representation learning models and datasets, we apply them to (i) SimCLR, a contrastive self-supervised model \citep{chen2020simple}, trained on ImageNet \citep{russakovsky2015imagenet}; (ii) SimSiam, a non-contrastive self-supervised model \citep{chen2021exploring}, trained on CIFAR-10 \citep{krizhevsky2009learning}; and (iii) representations extracted from the penultimate layer of a ResNet18 \citep{he2016deep}, trained on the abnormal-vs.-normal musculoskeletal X-ray dataset MURA \citep{rajpurkar2017mura}. 

\textbf{Setup.} We consider four explanation targets: representation similiarity, contrastive similarity, corpus similarity, and contrastive corpus similarity; in our tables, these correspond to the methods: label-free, contrastive label-free, corpus, and COCOA, respectively.  To conduct ablation studies for the components of COCOA, we introduce \textit{contrastive similarity}, which is contrastive corpus similarity when the corpus similarity term is replaced with similarity to an explicand; and \textit{corpus similarity}, which is contrastive corpus similarity without a foil similarity term.  We use cosine similarity in all methods for the main text results (analogous results with dot product similarity are in Appendix \ref{sec:quant_eval_additional_results}).

To evaluate the feature attributions for a given explicand image $\vx^e\in\mathbb{R}^{h,w,c}$ and explanation target, we compute feature attributions using either Integrated Gradients \citep{sundararajan2017axiomatic}, GradientSHAP \citep{smilkov2017smoothgrad,erion2021improving}, or RISE \citep{petsiuk2018rise}.  Then, we average the attributions across the channel dimension in order to use insertion and deletion metrics \citep{petsiuk2018rise}.  For deletion, we start with the original explicand and ``remove'' the $M$ most important pixels by masking\footnote{We always mask out all channels for a given pixel together.} them with a mask $m\in \{0,1\}^{h,w,c}$ and a blurred version of the explicand $\vx^m\in\mathbb{R}^{h,w,c}$.  We then evaluate the impact of each masked image with an evaluation measure $\eta(m\odot \vx^e + (1-m)\odot \vx^m)$.  In the supervised setting, this evaluation measure is typically the predicted probability of a class of interest.  If we plot the number of removed features on the x-axis and the evaluation measure on the y-axis, we would expect the curve to drop sharply initially, leading to a low area under the curve.  For insertion, which ``adds'' important features, we would expect a high area under the curve. In the setting of explaining representations, we measure (i) the contrastive corpus similarity, which is calculated with the same corpus and foil sets used for explanations; and (ii) the corpus majority prediction, which is the predicted probability of the majority class based on each corpus sample's prediction, where the prediction is from a downstream linear classifier trained to predict classes based on representations\footnote{Except for MURA, where we use the final layer of the jointly trained network.}. 

We evaluate and average our insertion and deletion metrics for 250 explicands drawn for ImageNet and CIFAR-10 and 50 explicands for MURA, all from held-out sets.  We consider both scenarios in which explicands and the corpus are in the same class and from different classes.  For our experiments, 100 training samples are randomly selected to be the corpus set for each class in CIFAR-10 and MURA. For ImageNet, 100 corpus samples are drawn from each of the 10 classes from ImageNette \citep{howard2019imagenette} instead of all 1000 classes for computational feasibility. Foil samples are randomly drawn from each training set. Because ResNets output non-negative representations from ReLU activations, we can apply Equation (\ref{eq:sharp_sample_bound}) and set the foil size to 1500 based on a $\delta = 0.01$ and $\varepsilon = 0.05$ in \bpropref{prop:sample_size}. We find that, even with a corpus size $=$ 1, COCOA can have good performance, and results with corpus size $\ge$ 20 are similar to each other (Appendix \ref{sec:corpus_size_sensitivity}). Varying the foil size does not seem to impact COCOA performance (Appendix \ref{sec:foil_size_sensitivity}). Additional experiment details are in Appendix \ref{sec:experiment_details}.

\begin{table}
    \centering
    \caption{Insertion and deletion metrics of corpus majority probability when explicands belong to the corpus class. Means (95\% confidence intervals) across 5 experiment runs are reported. Higher insertion and lower deletion values indicate better performance, respectively. Each method is a combination of a feature attribution method and an explanation target function (e.g., COCOA under RISE corresponds to feature attributions computed by RISE for the contrastive corpus similarity). Performance results of random attributions (last row) are included as benchmarks.}
    \resizebox{\textwidth}{!}{
    \begin{tabular}{lcccccc}\toprule
    Attribution Method & \multicolumn{2}{c}{Imagenet \& SimCLR} & \multicolumn{2}{c}{CIFAR-10 \& SimSiam} & \multicolumn{2}{c}{MURA \& ResNet} \\
    \cmidrule(lr){2-3} \cmidrule(lr){4-5} \cmidrule(lr){6-7}
    & Insertion ($\uparrow$) & Deletion ($\downarrow$) & Insertion ($\uparrow$) & Deletion ($\downarrow$) & Insertion ($\uparrow$) & Deletion ($\downarrow$) \\
    \midrule
    Integrated Gradients &&&&&& \\
    \midrule
    Label-Free & 0.362 (0.005) & 0.136 (0.004) & \textbf{0.403 (0.010)} & 0.249 (0.007) & 0.631 (0.040) & 0.513 (0.027) \\
    Contrastive Label-Free & 0.377 (0.005) & 0.125 (0.003) & 0.401 (0.011) & 0.243 (0.007) & 0.690 (0.019) & 0.453 (0.025) \\
    Corpus & 0.377 (0.005) & 0.147 (0.002) & 0.355 (0.014) & 0.249 (0.010) & 0.653 (0.014) & 0.500 (0.039) \\
    COCOA & \textbf{0.422 (0.006)} & \textbf{0.119 (0.003)} & 0.386 (0.012) & \textbf{0.230 (0.011)} & \textbf{0.807 (0.013)} & \textbf{0.330 (0.030)} \\
    \midrule
    Gradient SHAP &&&&&& \\
    \midrule
    Label-Free & 0.409 (0.004) & 0.131 (0.001) & 0.500 (0.008) & 0.244 (0.013) & 0.691 (0.038) & 0.523 (0.033) \\
    Contrastive Label-Free & 0.411 (0.003) & 0.127 (0.002) & 0.500 (0.009) & 0.238 (0.012) & 0.697 (0.037) & 0.510 (0.018) \\
    Corpus & 0.421 (0.006) & 0.136 (0.001) & 0.478 (0.008) & 0.242 (0.009) & 0.729 (0.024) & 0.494 (0.037) \\
    COCOA & \textbf{0.445 (0.003)} & \textbf{0.123 (0.002)} & \textbf{0.508 (0.007)} & \textbf{0.211 (0.008)} & \textbf{0.788 (0.030)} & \textbf{0.419 (0.013)} \\
    \midrule
    RISE &&&&&& \\
    \midrule
    Label-Free (RELAX) & 0.396 (0.005) & 0.160 (0.005) & 0.630 (0.005) & 0.283 (0.004) & 0.704 (0.022) & 0.600 (0.012) \\
    Contrastive Label-Free & 0.424 (0.008) & 0.141 (0.005) & 0.632 (0.008) & 0.279 (0.004) & 0.730 (0.019) & 0.534 (0.015) \\
    Corpus & 0.394 (0.010) & 0.166 (0.003) & 0.588 (0.005) & 0.314 (0.006) & 0.701 (0.019) & 0.617 (0.026) \\
    COCOA & \textbf{0.456 (0.009)} & \textbf{0.126 (0.001)} & \textbf{0.663 (0.006)} & \textbf{0.256 (0.006)} & \textbf{0.840 (0.009)} & \textbf{0.415 (0.025)} \\
    \midrule
    Random & 0.269 (0.003) & 0.268 (0.002) & 0.329 (0.013) & 0.329 (0.010) & 0.624 (0.018) & 0.629 (0.018) \\
    \bottomrule
    \end{tabular}
}
    \label{table:corpus_majority_prob_norm_same_class}
\end{table}

\textbf{Results.} First, our evaluations show that COCOA consistently has the strongest performance in contrastive corpus similarity insertion and deletion metrics across all feature attribution methods, models, and datasets (Appendix \ref{sec:quant_eval_sanity_checks}).  These metrics imply that COCOA generates feature attributions that successfully describe whether pixels in the explicand make it more similar to the corpus and dissimilar to the foil based on cosine similarity.  Although this is an important sanity check, it is perhaps unsurprising that COCOA performs strongly in this evaluation metric, because COCOA's explanation target is exactly the same as this evaluation metric. 

\begin{table}
    \centering
    \caption{Insertion and deletion metrics of corpus majority probability when explicands \textit{do not} belong to the corpus class. Means (95\% confidence intervals) across 5 experiment runs are reported. Higher insertion and lower deletion values indicate better performance, respectively. Each method is a combination of a feature attribution method and an explanation target function (e.g., COCOA under RISE corresponds to feature attributions computed by RISE for the contrastive corpus similarity). Performance results of random attributions (last row) are included as benchmarks.}
    \resizebox{\textwidth}{!}{
    \begin{tabular}{lcccccc}\toprule
    Attribution Method & \multicolumn{2}{c}{Imagenet \& SimCLR} & \multicolumn{2}{c}{CIFAR-10 \& SimSiam} & \multicolumn{2}{c}{MURA \& ResNet} \\
    \cmidrule(lr){2-3} \cmidrule(lr){4-5} \cmidrule(lr){6-7}
    & Insertion ($\uparrow$) & Deletion ($\downarrow$) & Insertion ($\uparrow$) & Deletion ($\downarrow$) & Insertion ($\uparrow$) & Deletion ($\downarrow$) \\
    \midrule
    Integrated Gradients &&&&&& \\
    \midrule
    Label-Free & 3.53e-04 $\pm$ 1.06e-04 & 4.44e-04 $\pm$ 1.01e-04 & 0.061 (0.004) & 0.079 (0.004) & 0.394 (0.028) & 0.481 (0.031) \\
    Contrastive Label-Free & 3.36e-04 $\pm$ 1.13e-04 & 4.44e-04 $\pm$ 1.00e-04 & 0.062 (0.004) & 0.082 (0.004) & 0.354 (0.022) & 0.518 (0.027) \\
    Corpus & 1.09e-03 $\pm$ 2.85e-04 & 2.26e-04 $\pm$ 5.31e-05 & 0.094 (0.003) & 0.066 (0.003) & 0.609 (0.017) & 0.262 (0.029) \\
    COCOA & \textbf{1.69e-03 $\pm$ 5.07e-04} & \textbf{1.55e-04 $\pm$ 2.21e-05} & \textbf{0.099 (0.004)} & \textbf{0.059 (0.005)} & \textbf{0.647 (0.017)} & \textbf{0.213 (0.030)} \\
    \midrule
    Gradient SHAP &&&&&& \\
    \midrule
    Label-Free & 2.05e-04 $\pm$ 5.96e-05 & 5.55e-04 $\pm$ 1.11e-04 & 0.054 (0.004) & 0.080 (0.004) & 0.362 (0.031) & 0.469 (0.021) \\
    Contrastive Label-Free & 2.02e-04 $\pm$ 5.06e-05 & 5.10e-04 $\pm$ 9.65e-05 & 0.053 (0.002) & 0.080 (0.004) & 0.361 (0.022) & 0.477 (0.018) \\
    Corpus & 8.03e-04 $\pm$ 1.64e-04 & 2.65e-04 $\pm$ 5.44e-05 & 0.106 (0.006) & 0.055 (0.003) & 0.549 (0.020) & 0.325 (0.019) \\
    COCOA & \textbf{1.56e-03 $\pm$ 4.32e-04} & \textbf{1.67e-04 $\pm$ 5.66e-05} & \textbf{0.122 (0.008)} & \textbf{0.045 (0.003)} & \textbf{0.592 (0.025)} & \textbf{0.236 (0.012)} \\
    \midrule
    RISE &&&&&& \\
    \midrule
    Label-Free (RELAX) & 3.70e-04 $\pm$ 7.34e-05 & 5.37e-04 $\pm$ 1.23e-04 & 0.039 (0.003) & 0.080 (0.005) & 0.330 (0.016) & 0.433 (0.030) \\
    Contrastive Label-Free & 3.32e-04 $\pm$ 1.14e-04 & 5.84e-04 $\pm$ 8.24e-05 & 0.040 (0.003) & 0.081 (0.005) & 0.307 (0.018) & 0.472 (0.027) \\
    Corpus & 4.79e-04 $\pm$ 9.68e-05 & 3.87e-04 $\pm$ 8.41e-05 & 0.086 (0.005) & 0.043 (0.003) & 0.497 (0.031) & 0.269 (0.020) \\
    COCOA & \textbf{9.51e-04 $\pm$ 2.58e-04} & \textbf{3.60e-04 $\pm$ 1.40e-04} & \textbf{0.107 (0.007)} & \textbf{0.031 (0.002)} & \textbf{0.590 (0.027)} & \textbf{0.181 (0.014)} \\
    \midrule
    Random & 4.87e-04 $\pm$ 9.50e-05 & 5.03e-04 $\pm$ 9.73e-05 & 0.070 (0.003) & 0.070 (0.004) & 0.406 (0.013) & 0.407 (0.016) \\
    \bottomrule
    \end{tabular}
}
    \label{table:corpus_majority_prob_norm_diff_class}
\end{table}

Next, in Tables
\ref{table:corpus_majority_prob_norm_same_class} and \ref{table:corpus_majority_prob_norm_diff_class}, COCOA has the strongest performance in corpus majority probability insertion and deletion metrics across nearly all feature attribution methods, models, and datasets.  This is arguably a more interesting evaluation compared to contrastive corpus similarity, given that COCOA is not explicitly designed to perform this task.  Evaluation using the corpus majority predicted probability is semantically meaningful, because it tests whether features with high attributions move an explicand representation closer to the corpus as opposed to other classes across a decision boundary. Furthermore, evaluation with a linear classifier has direct implications for downstream classification tasks and follows the widely used linear evaluation protocol for representation learning \citep{bachman2019learning, kolesnikov2019revisiting, chen2020simple}.  In Table \ref{table:corpus_majority_prob_norm_same_class}, we evaluate each method in the setting where explicands are from the same class as the corpus.  This enables us to compare to label-free attribution methods, which can still perform strongly on these metrics, because large changes in the explicand representation likely correlate with changes in the predicted probability of the explicand's class.  Here, we can see that COCOA consistently outperforms label-free attribution methods and also benefits from the combination of a corpus and foil set.  Importantly, in Table \ref{table:corpus_majority_prob_norm_diff_class}, corpus-based attribution methods perform strongly especially when the corpus is from a different class compared to the explicand.  This is significant, because it implies that we can use COCOA to generate explanations that answer questions such as, "What pixels in my dog image make my representation similar to cat images?", which was previously impossible using label-free approaches.

\subsection{Understanding data augmentations in SimCLR}
\label{sec:data_aug}
In this section, we aim to visualize augmentation-invariant features that preserve class labels in SimCLR.  Because class labels are associated with original images, we use COCOA to find features that make the representation of an augmented image similar to that of the original image.

SimCLR is a self-supervised contrastive learning method that pushes images from the same class close together and images from different classes far apart. This is achieved by its contrastive objective, which maximizes the similarity between representations from two augmentations of the same image and minimizes the similarity between representations from different images. Empirically, linear classifiers trained on representations learned by SimCLR perform comparably to strong supervised baselines \citep{chen2020simple}. Theoretical analyses of SimCLR performance rely on the idea that data augmentations preserve the class label of an image \citep{arora2019theoretical, haochen2021provable}. Here, we show which input features are associated with label preservation, which has not been previously shown.

\textbf{Setup.} The same SimCLR model and its downstream classifier for ImageNet from Section \ref{sec:quant_eval} are used here. For each image, we investigate augmentations used to train SimCLR including flipping, cropping, flipping with cropping, grayscaling, and color jittering. We also consider cutout and $90^{\circ}$ rotation, which are not included in SimCLR training. Class prediction based on the linear classifier is computed for the original version and all augmentations of each image to check class preservation. With each original and augmented image as the explicand, the original image as the corpus, and 1500 random images from the ImageNet training set as the foil, COCOA is paired with RISE to visually identify class-preserving features. The RISE parameters are the same as in Section \ref{sec:quant_eval}.

\textbf{Results.} We first note that the original images and the augmented images after flipping, cropping, flipping with cropping, grayscaling, and color jittering yield correct class predictions (Figure \ref{fig:augmentation}). However, cutout and rotated images, which are not part of SimCLR training, are less robust and can result in misclassifications. Because class labels are generated based on the original images, important features identified by COCOA with each original image as the corpus are class-preserving for correctly classified augmentations. Qualitatively, the class-preserving region of the English springer seems to be its face, as the face is highlighted in all augmentations with the correct classification and the original image (Figure \ref{fig:augmentation}a). Contrarily, in the rotated image, which is misclassified as a llama, COCOA highlights the body of the English springer instead of its face. Similarly, the piping in the French horn is the class-preserving region (Figure \ref{fig:augmentation}b). When the piping is cut out, the image is misclassified, and COCOA highlights only the face of the performer. Finally, the co-occurrence of the parachute and parachuter is the important feature for an augmented image to have the correct classification of parachute (Figure \ref{fig:augmentation}c). Additional results are in Appendix \ref{sec:data_aug_additional_results}.

\begin{figure}
\centering
\includegraphics[width=0.8\textwidth]{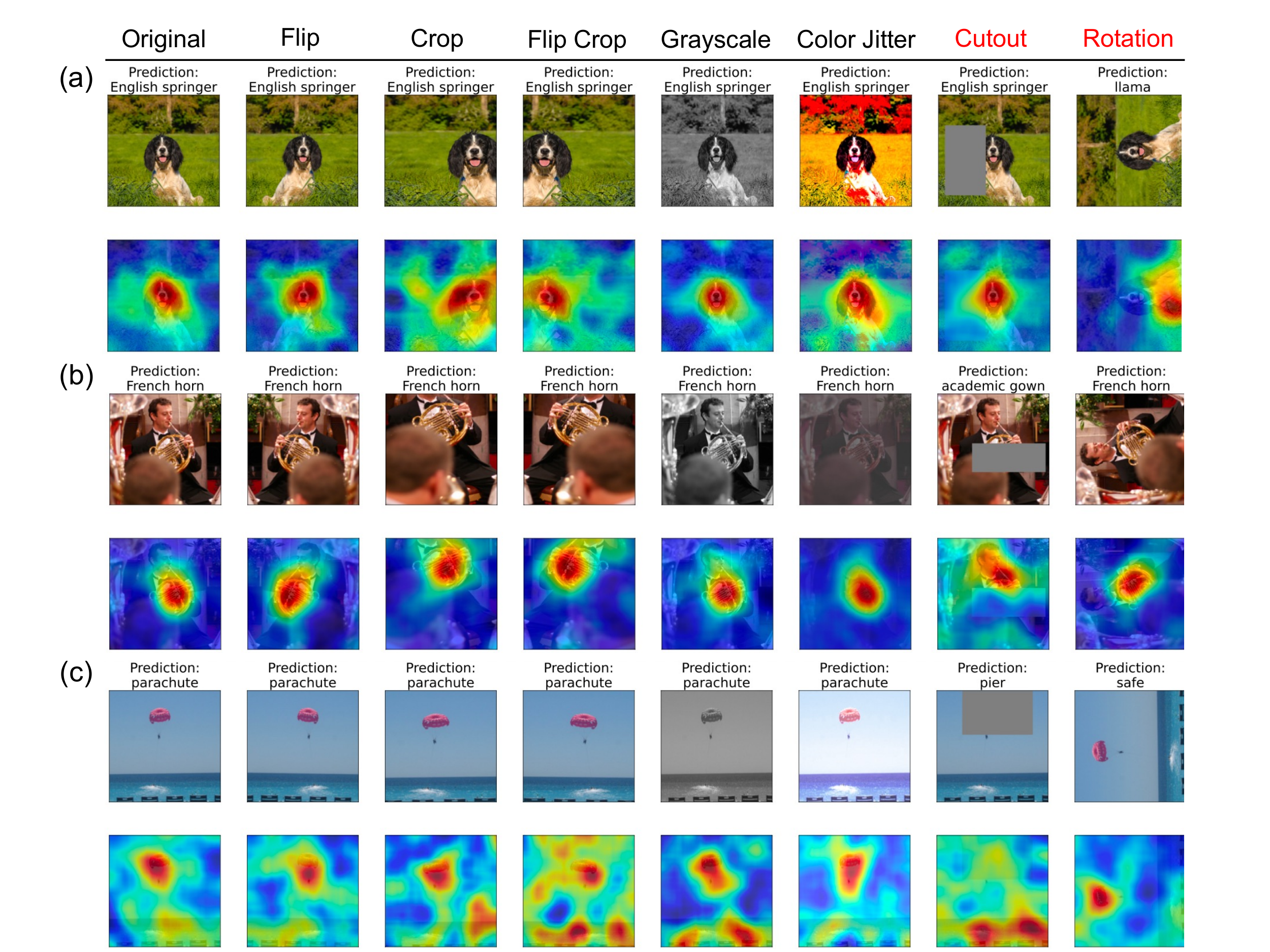}
\caption{Original version and augmentations of images with their class predictions (top row), along with the corresponding COCOA attributions (red for higher values and blue for lower values) with each original image as the corpus and random images as the foil (bottom row); for (a) an English springer, (b) a French horn, and (c) a parachute. \textcolor{red}{Cutout} and \textcolor{red}{rotation} are not included in SimCLR training.} 
\label{fig:augmentation}
\end{figure}

\subsection{Mixed modality object localization}
\label{sec:clip}

Here, we showcase the utility of COCOA by performing zero-shot object localization based on mixed modality representations as in \citep{gadre2022clip}.  In this experiment, we work with CLIP which trains a text and image encoder on paired data: images with associated captions \citep{radford2021learning}.  To apply COCOA, a necessary prerequisite is a set of corpus examples.  These examples are not necessarily hard to gather, since small sets of corpus examples can be sufficient (Appendix \ref{sec:corpus_size_sensitivity}).  However, if our image representations are generated via the CLIP image encoder, we can circumvent the need to gather a set of corpus images by instead explaining similarity of our explicand image to a corpus caption of interest.  This enables us to easily ask semantically meaningful questions such as, "What in this image makes its representation similar to a woman?".


\textbf{Setup.} In this experiment, we have two CLIP representation encoders $f^{text}: \cX^{text} \rightarrow \sR^d$ and $f^{image}: \cX^{image} \rightarrow \sR^d$.  Both encoders map their respective input domains into a common representation space such that the representations of a caption and image pair have high cosine similarity when the caption is descriptive of the paired image.  We use the original implementation of CLIP\footnote{\url{https://github.com/openai/CLIP}} and model ViT-B/32.  The feature attributions are computed using RISE with the same parameters as in Section \ref{sec:quant_eval}, except with 20,000 masks.

Here, we aim to explain an explicand image $\vx^e\in \cX^{image}$.  For label-free importance, the explanation target is the representation similarity based on the image encoder: $\gamma_{f^{image}, \vx^e}(\cdot)$.
Then, we use an adapted version of the COCOA explanation target, which uses a text corpus set $\cC^{text} \subset \cX^{text}$ and a text foil set $\cF^{text} \subset \cX^{text}$ and maps from images to contrastive corpus similarity ${\hat{\gamma}_{f^{image}, f^{text}, \cC^{text}, \cF^{text}}}(\vx): \cX^{image} \rightarrow \sR$, defined as follows:
\begin{equation}
    \frac{1}{|\cC^{text}|} \sum_{\vx^c \in \cC^{text}} \frac{f^{image}(\vx)^T f^{text}(\vx^c)}{\norm{f^{image}(\vx)} \norm{f^{text}(\vx^c)}} - \frac{1}{|\cF^{text}|} \sum_{\vx^v \in \cF^{text}} \frac{f^{image}(\vx)^T f^{text}(\vx^v)}{\norm{f^{image}(\vx)} \norm{f^{text}(\vx^v)}}.
\end{equation}
In particular, the corpus set consists of a single caption $\cC=\{\vx^c\}$ of the following form, $\vx^c=$``This is a photo of a $P$'', where $P$ is manually chosen.  Then, the foil set is either (i) a single caption $\cF=\{\vx^v\}$ of the form, $\vx^v=$``This is a photo of a $Q$'', where $Q$ is manually chosen or (ii) many captions $\cF=\{\vx^v_i\}$ of the forms, $\vx^v_i=$``This is a photo of a $Q_i$'' where $Q_i\in \cC^{\text{cifar100}} \setminus P$ is each of the 100 classes in CIFAR-100 excluding the corpus class $P$.  Each explicand image is a public domain image from \url{https://www.pexels.com/}\footnote{With the exception of the astronaut image, which is available in the \texttt{scikit-image} package.}.

\begin{figure}
\centering
\centering
\includegraphics[width=.9\linewidth]{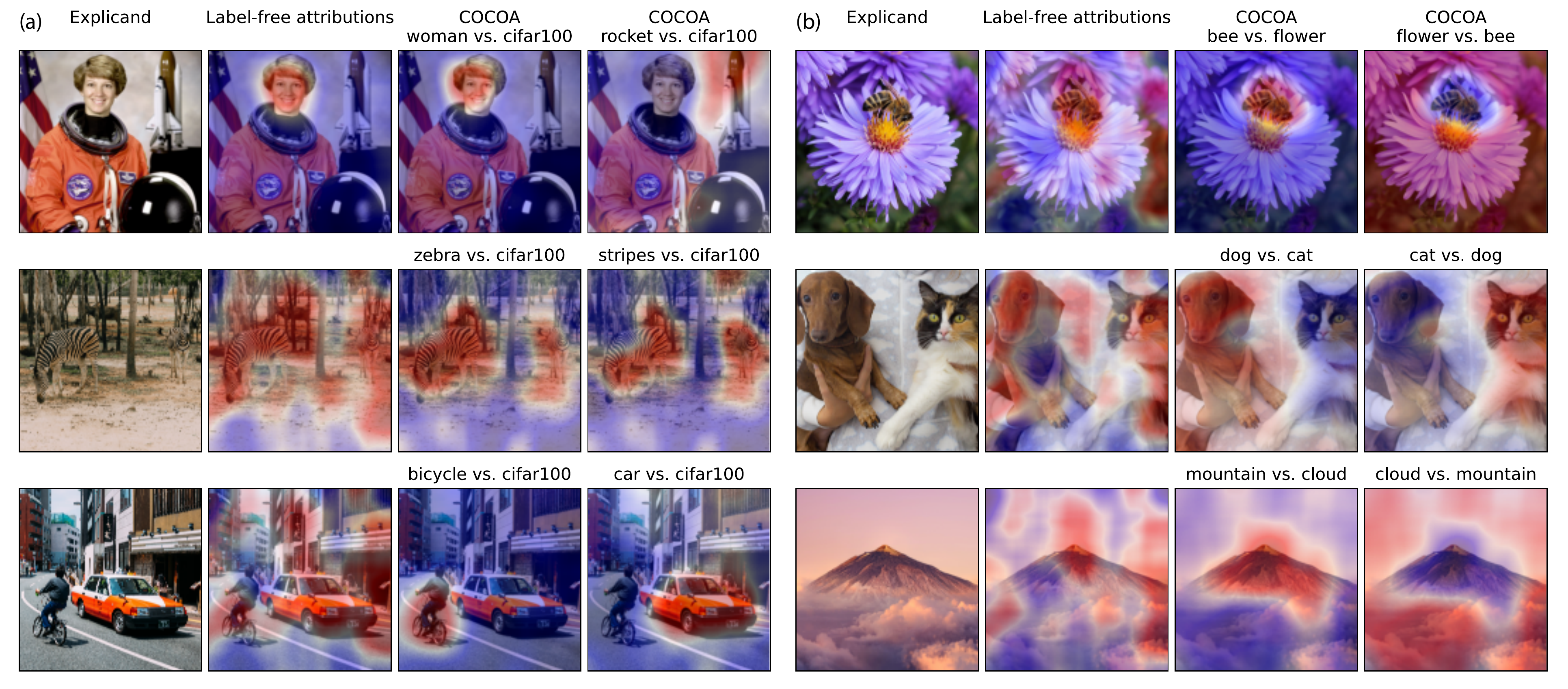}
\caption{Visualization of feature attributions.  (a) Importance of a corpus text compared to many foil texts (based on CIFAR-100 classes).  (b) Importance of a corpus text compared to a foil text.  (a)-(b) The leftmost column is the original explicand, the second column is label-free attributions, and the third and fourth columns are COCOA attributions using $P$ vs. $Q$ as the notation for the corpus and foil text sets (details in Section \ref{sec:clip}).  Each feature attribution is overlaid on the original image such that blue is the minimum value and red is the maximum value in given attribution.}
\label{fig:clip}
\end{figure}

\textbf{Results.} We visualize the label-free and COCOA attributions for several explicands in Figure \ref{fig:clip}.  First, we find that label-free attributions highlight a variety of regions in the explicand image.  Although these pixels contribute to similarity to the explicand's representation, with the exception of the astronaut image, the important pixels are largely not localized and therefore hard to interpret.  For instance, in the mountain image we cannot say whether the mountain or cloud is the most important to the explicand's representation, because the most positively and negatively important pixels are distributed across both objects.  

Instead, we find that using COCOA, we can ask specific questions.  For instance, we may ask, ``compared to many other classes, what pixels in this image look like $P$?'' (Figure \ref{fig:clip}a).  We evaluate a number of these questions in the first three images.  In the astronaut image, we can see the woman's face and the rocket are localized.  Likewise in the image with both a bicycle and a car, we can localize either the bicycle or the car.  In the zebra image, we are able to localize the zebras by using corpus texts that aim to identify a ``zebra'' or ``stripes''.  Finally, in the last three images we find that we can formulate specific contrastive questions that ask, ``what pixels in this image look like $P$ rather than $Q$?'' (Figure \ref{fig:clip}b).  Doing so we successfully identify a bee rather than a flower, a dog rather than a cat, and a mountain rather than clouds (and vice versa respectively).

\section{Discussion}

In this work, we introduce COCOA, a method to explain unsupervised models based on similarity in a representation space.  We theoretically analyze our method and demonstrate its compatibility with many feature attribution methods.  We quantitatively validate COCOA in several datasets and on contrastive self-supervised, non-contrastive self-supervised, and supervised models.  Then, we use COCOA to better understand data augmentations in a self-supervised setting and to perform zero-shot object localization.

Although COCOA is already a useful technique to understand representations, there are many interesting future research directions.  First, we can adapt existing automatic concept discovery methods \citep{ghorbani2019towards,yeh2020completeness} to automatically identify corpus sets.  Second, we can use other explanation paradigms for our novel contrastive corpus similarity explanation target.  This includes counterfactual explanation methods \citep{verma2020counterfactual}, which only require gradients to optimize a counterfactual objective.  Finally, a fundamental limitation of COCOA is that it cannot explain corpus sets if the model has not learned anything about the given corpus.  Therefore, a valuable future direction is to investigate complementary methods to detect whether the model's representation contains information about a corpus prior to application of COCOA.

\section*{Acknowledgements}
We thank members of the Lee Lab and the three anonymous ICLR reviewers for helpful discussions. This work was funded by the National Science Foundation [DBI-1552309, DBI-1759487, and DGE-1762114]; and the National Institutes of Health [R35 GM 128638 and R01 NIA AG 061132].

\bibliography{iclr2023_conference}
\bibliographystyle{iclr2023_conference}

\newpage
\appendix
\numberwithin{equation}{section}

\section{Related work}
\label{sec:related}

\textbf{Representation learning.} COCOA can be used to explain any representation learning model.  Representation learning aims to automatically discover latent dimensions that are valuable to downstream tasks (e.g., classification) \citep{bengio2013representation}.  It has been applied to great effect within natural language processing \citep{deng2013new,van2017neural} and computer vision \citep{ciregan2012multi,gidaris2018unsupervised} and encompasses both supervised and unsupervised models ($f: \cX \rightarrow \cZ$).  For supervised models the output space corresponds to known labels ($\cZ=\cY$), whereas for unsupervised models the outputs are latent representations ($\cZ=\cH$).  Within unsupervised representation learning, self-supervised learning leverages unlabelled data to formulate learning tasks \citep{grill2020bootstrap,misra2020self}.  One particularly successful instantiation of self-supervised learning is contrastive learning, which optimizes the similarity between different views of the same sample and dissimilarity to random samples \citep{chen2020simple,radford2021learning}.  

\textbf{Representation-space explanations.} COCOA explains a representation (latent) space.  An early post-hoc explanation method based on representation spaces is TCAV \citep{kim2018interpretability} which defines concepts based on the representation space and produces a global explanation of whether a given class depends on a concept of interest.  A number of approaches similarly define concepts in the representation space and build upon TCAV with automatic concept discovery \citep{ghorbani2019towards,yeh2020completeness}.  Finally, \citet{basaj2021explaining} build upon TCAV to define visual probes to validate whether a representation encodes a property of interest.  These approaches detect whether a given concept is important to the model, whereas our approach aims to understand which features make a representation similar to a given concept captured by a corpus.


\textbf{Supervised explanations.} COCOA builds upon existing supervised explanation methods to explain an unsupervised setting.  The majority of explanation methods operate on supervised models and aim to explain a scalar output, often the model's predicted probability for a specific class.  These methods vary in the unit of explanation (i.e., the way in which they describe what was important) and the explanation target (i.e., the model behavior they aim to explain).  The units of explanation for example attributions and counterfactual attributions are samples which are either important to or modify the explanation target \cite{koh2017understanding,wachter2017counterfactual}.  In contrast, the units of explanation of feature attributions or concept attributions are importance scores which quantify how features or concepts contribute to the explanation target \cite{lundberg2017unified,kim2018interpretability}.  

In this paper, we focus on feature attribution methods, which are the most widely-used explanation technique, but note that it is possible to apply many explanation methods to understand our new explanation target.  In particular, we compute local feature attributions using three popular methods.  The first is Integrated Gradients \citep{sundararajan2017axiomatic}, a popular gradient-based feature attribution method, which accumulates gradients between a baseline and an explicand. The second is GradientSHAP\footnote{\url{https://captum.ai/api/gradient_shap.html}}, which combines SmoothGrad \citep{smilkov2017smoothgrad} and Expected Gradients \citep{erion2021improving}, a generalization of Integrated Gradients.  Third, we use RISE \citep{petsiuk2018rise}, a removal-based explanation method \citep{covert2021explaining}, which computes feature attributions by randomly masking portions of an explicand.

\textbf{Unsupervised explanations.} COCOA explicitly aims to provide explanations for unsupervised models. There is largely a deficit of explanation methods suitable to this task and we are aware of only two recent methods that address this problem: RELAX \citep{wickstrom2021relax} and label-free explainability \citep{crabbe2022label}.  Both methods define an explanation target function based on similarity to the explicand, where RELAX uses the cosine kernel ($\gamma_{f,\vx^e}(\vx)=(f(\vx)^Tf(\vx^e))/(\norm{\vx} \norm{\vx^e})$) and label-free explainability uses the dot product ($\gamma_{f,\vx^e}(\vx)=f(\vx)^Tf(\vx^e)$).  Then, both approaches utilize pre-existing local feature attribution methods to explain their respective explanation targets, where RELAX specfically utilizes RISE, and label-free explainability is an agnostic wrapper function.  A final distinction between the two papers is that label-free explainability additionally provides example attributions which aim to identify important training examples. While COCOA, RELAX, and label-free explainability are motivated to explain unsupervised models, they all can be applied to explain representations in general, whether obtained through unsupervised or supervised learning.

\textbf{Contrastive explanations.} COCOA makes unsupervised explanations more interpretable by making them contrastive.  Contrastive explanations aim to understand an event $P$ (the fact) by contrasting it to another event $Q$ (the foil) \citep{jacovi2021contrastive}. Social science studies recognize that contrastive explanations align with human intuition and  reduce an explanation's complexity by ignoring factors common to both the fact and the foil \citep{kahneman1986norm, lipton_1990, miller2019explanation}. One strategy to obtain contrastive explanations is to simply define an explanation target function which describes a contrastive model behavior (e.g., the difference between the model's prediction for the fact and the foil) \citep{jacovi2021contrastive}.

\section{Motivations for cosine similarity as the representation similarity}\label{sec:motivations_for_cosine_similarity}
The representation similarity defined in \bdefref{def:rep_sim} is the cosine similarity of representations. There are multiple motivations for this choice. First, this representation similarity is used in RELAX as the explanation target function \citep{wickstrom2021relax}. Similarly, label-free feature importance uses the dot product without normalization \citep{crabbe2022label}. Hence, the choice of cosine similarity facilitates comparisons with prior works. Second, many self-supervised models are trained with the cosine similarity or the related cross-correlation in their loss functions \citep{chen2020simple, grill2020bootstrap, chen2021exploring, radford2021learning, zbontar2021barlow}, so cosine similarity is a natural choice for measuring similarity between representations learned by these models. Finally, other similarity measures, such as the Gaussian kernel function, may contain parameters that impact explanation outcomes, but to our knowledge there are no principled ways of tuning these parameters for the purpose of generating explanations.

\section{Some properties of the empirical estimator of contrastive corpus similarity}\label{sec:empirical_estimator_properties}

\begin{proposition}\label{prop:unbiased}
    The estimator $\eccsim(\vx)$ is an unbiased estimator of $\ccsim(\vx)$.
\end{proposition}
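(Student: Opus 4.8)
The plan is to compute the expectation of $\eccsim(\vx)$ directly over the randomness in the foil sample $\cF = \{\vx^{(1)}, \dots, \vx^{(m)}\}$, where $\vx^{(1)}, \dots, \vx^{(m)} \iid \cD_{foil}$, and to check it equals $\ccsim(\vx)$. The key observation is that the first term of $\eccsim(\vx)$, namely $\frac{1}{|\cC|}\sum_{\vx^c \in \cC} s_f(\vx, \vx^c)$, is a deterministic quantity: it depends only on the fixed corpus $\cC$ and the fixed explicand, not on $\cF$. Hence it passes through the expectation unchanged and already matches the first term of $\ccsim(\vx)$.

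For the second term, the steps are: first note that cosine similarity is bounded, $|s_f(\vx, \vx^{(i)})| \le 1$, so each $s_f(\vx, \vx^{(i)})$ is integrable and the relevant expectations are finite. Then apply linearity of expectation to write $\E\!\left[\frac{1}{m}\sum_{i=1}^m s_f(\vx, \vx^{(i)})\right] = \frac{1}{m}\sum_{i=1}^m \E\!\left[s_f(\vx, \vx^{(i)})\right]$. Since the $\vx^{(i)}$ are identically distributed according to $\cD_{foil}$, each summand equals $\E_{\vx^v \sim \cD_{foil}}[s_f(\vx, \vx^v)]$, so the sum collapses to $\frac{1}{m}\cdot m \cdot \E_{\vx^v \sim \cD_{foil}}[s_f(\vx, \vx^v)] = \E_{\vx^v \sim \cD_{foil}}[s_f(\vx, \vx^v)]$.

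Combining the two terms gives $\E[\eccsim(\vx)] = \frac{1}{|\cC|}\sum_{\vx^c \in \cC} s_f(\vx, \vx^c) - \E_{\vx^v \sim \cD_{foil}}[s_f(\vx, \vx^v)] = \ccsim(\vx)$, which is exactly the claim. Note that independence of the $\vx^{(i)}$ is not even needed here — only that they are identically distributed — though it will matter for the concentration bound in the companion result. I do not anticipate a real obstacle: the only thing worth being careful about is the integrability remark, which is immediate from boundedness of cosine similarity, so the proof is essentially a one-line application of linearity of expectation.
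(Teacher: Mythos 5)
Your proof is correct and follows essentially the same route as the paper's: linearity of expectation applied to the foil average, plus the fact that the $\vx^{(i)}$ are identically distributed from $\cD_{foil}$, with the corpus term passing through as a constant. The extra remarks on integrability and on independence being unnecessary are fine but not needed.
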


\begin{proposition}\label{prop:sample_size}
    For any $\delta \in (0, 1)$ and $\varepsilon > 0$, if the sample size $m$ in $\eccsim(\vx)$ satisfies
    \begin{equation}\label{eq:sample_bound}
        m \ge \frac{2\log(2 / \delta)}{\varepsilon^2},
    \end{equation}
    then
    \begin{equation}
        P(|\eccsim(\vx) - \ccsim(\vx)| \ge \varepsilon) \le \delta.
    \end{equation}
    If it is further assumed that $f$ maps from $\cX$ to the set of non-negative vectors $\sR_+^d$ (e.g., a neural network with a ReLU output layer), then the same concentration bound is achieved with
    \begin{equation}\label{eq:sharp_sample_bound}
        m \ge \frac{\log(2 / \delta)}{2 \varepsilon^2}.
    \end{equation}
\end{proposition}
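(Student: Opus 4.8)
The plan is to observe that the proposition is essentially a one-line consequence of Hoeffding's inequality, once the random part of the estimator is isolated. First I would write out the difference between the estimator and its target. Since the corpus term $\frac{1}{|\cC|}\sum_{\vx^c \in \cC} s_f(\vx, \vx^c)$ is a fixed (deterministic) quantity appearing identically in both $\eccsim(\vx)$ and $\ccsim(\vx)$, it cancels, leaving
\[
    \eccsim(\vx) - \ccsim(\vx) \;=\; \E_{\vx^v \sim \cD_{foil}}\big[s_f(\vx, \vx^v)\big] - \frac{1}{m}\sum_{i=1}^m s_f(\vx, \vx^{(i)}).
\]
With $\vx$ held fixed, the terms $Y_i := s_f(\vx, \vx^{(i)})$ are i.i.d.\ random variables over the draws $\vx^{(i)} \iid \cD_{foil}$, with common mean $\E_{\vx^v \sim \cD_{foil}}[s_f(\vx, \vx^v)]$, so the right-hand side is precisely minus the centered sample mean $\frac{1}{m}\sum_i (Y_i - \E[Y_i])$.

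Next I would bound the support of $Y_i$. By Cauchy--Schwarz, $|f(\vx)^T f(\vx^v)| \le \norm{f(\vx)}\,\norm{f(\vx^v)}$, so $s_f$ always takes values in $[-1,1]$, an interval of width $2$; Hoeffding's inequality applied to $\frac{1}{m}\sum_i Y_i$ then gives
\[
    P\big(|\eccsim(\vx) - \ccsim(\vx)| \ge \varepsilon\big) \;\le\; 2\exp\!\Big(-\frac{2m\varepsilon^2}{2^2}\Big) \;=\; 2\exp\!\Big(-\frac{m\varepsilon^2}{2}\Big).
\]
Requiring the right-hand side to be at most $\delta$ and solving for $m$ yields $m \ge 2\log(2/\delta)/\varepsilon^2$, which is Equation~(\ref{eq:sample_bound}). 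For the sharper statement, I would note that if $f$ maps into $\sR_+^d$ then $f(\vx)^T f(\vx^v) = \sum_j f(\vx)_j f(\vx^v)_j \ge 0$, so $s_f \in [0,1]$ and each $Y_i$ now lies in an interval of width $1$; re-running the same Hoeffding estimate replaces $2^2$ by $1^2$ in the exponent, giving $P(\,\cdot \ge \varepsilon) \le 2\exp(-2m\varepsilon^2)$, and solving for $m$ gives $m \ge \log(2/\delta)/(2\varepsilon^2)$, which is Equation~(\ref{eq:sharp_sample_bound}).

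There is no genuinely hard step here; the points that need care are (i) confirming that the corpus term truly cancels, which it does because the same finite corpus $\cC$ enters both the estimator and its target; (ii) correctly propagating the width of the bounding interval ($2$ in general, $1$ under the non-negativity assumption) through Hoeffding's inequality; and (iii) remembering that the two-sided bound, hence the factor $2$ inside the exponential and the resulting $\log(2/\delta)$, comes from applying the one-sided Hoeffding bound to $Y_i$ and to $-Y_i$ together with a union bound. One could also remark in passing that the estimator is well defined only when $f(\vx) \neq \vzero$ and each $f(\vx^v) \neq \vzero$, which is tacitly assumed throughout.
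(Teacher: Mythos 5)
Your proposal is correct and follows essentially the same route as the paper: cancel the deterministic corpus term, observe that the foil similarities are i.i.d.\ and bounded in $[-1,1]$ (or $[0,1]$ under non-negativity), and apply the two-sided Hoeffding inequality with interval width $2$ (resp.\ $1$) before solving for $m$. The constants and both sample-size bounds match the paper's derivation exactly.
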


\bpropref{prop:sample_size} allows us to choose an empirical foil sample size with theoretical justification.

\section{Feature attribution properties}\label{sec:feature_attribution_properties}

There are a multitude of feature attribution methods.   In order to compare these methods, researchers have debated their respective merits in terms of the properties they satisfy \citep{sundararajan2020many}.  Most feature attribution methods satisfy certain properties with respect to the model output (explanation target) they explain.  Methods which satisfy a property when the explanation target is the predicted probability of a particular class will also satisfy a related property for other explanation targets such as representation similarity or contrastive corpus similarity. 

As an example, we may consider the property of completeness, which is satisfied by Integrated Gradients, KernelSHAP, DeepLIFT, and a number of other explanation methods.  Methods which satisfy completeness generate feature attributions which sum up to the explanation target function applied to the explicand minus a baseline value $b_0$:
\begin{equation}
    \sum_i\phi_i(\gamma_{f, *}, \vx^e) = \gamma_{f, *}(\vx^e) - b_0.
\end{equation}
For attribution methods that use a single baseline (e.g., Integrated Gradients, DeepLIFT), the baseline value is equal to the explanation target function applied to a baseline of the user's choosing (i.e., $b_0=\gamma_{f, *}(\vx^b)$).  Therefore, supervised feature attributions sum up to the difference between the predicted probability of a class of interest for the explicand $\vx^e$ and baseline $\vx^b$:
\begin{equation}
    \sum_i\phi_i(\gamma_{f^{class}}, \vx^e) = \gamma_{f^{class}}(\vx^e) - \gamma_{f^{class}}(\vx^b).
\end{equation}
Analogously, in the unsupervised explanation setting, COCOA feature attributions will sum up to the difference in the contrastive corpus similarity between the explicand and the baseline:
\begin{equation}
    \sum_i\phi_i(\eccsim, \vx^e) = \eccsim(\vx^e) - \eccsim(\vx^b).
\end{equation}
Note that analogous completeness properties may be shown for representation similarity, contrastive similarity, and corpus similarity.

\section{Proposition for interpreting COCOA with respect to the contrastive corpus similarity}

\begin{proposition}\label{prop:mean_direction}
    Contrastive corpus similarity is the dot product of the input's representation direction and the contrastive direction. That is,
    \begin{equation}
        \ccsim(\vx) = \bigg(\frac{f(\vx)}{\norm{\vx}}\bigg)^T \bigg(\frac{1}{|\cC|} \sum_{\vx^c \in \cC} \frac{f(\vx^c)}{\norm{\vx^c}} - \E_{\vx^v \sim \cD_{foil}}\bigg[\frac{f(\vx^v)}{\norm{\vx^v}}\bigg]\bigg).
    \end{equation}
\end{proposition}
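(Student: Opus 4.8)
The plan is simply to unfold the two definitions that appear in the statement and then use bilinearity of the Euclidean inner product together with linearity of expectation; no deeper machinery is needed. I would begin from \bdefref{def:contrast_corpus_sim}, which expresses $\ccsim(\vx)$ as the corpus average of $s_f(\vx,\vx^c)$ minus $\E_{\vx^v\sim\cD_{foil}}[s_f(\vx,\vx^v)]$, and then substitute the formula for $s_f$ from \bdefref{def:rep_sim} in its factored form
\[
s_f(\vx,\vx') = \frac{f(\vx)^T f(\vx')}{\norm{f(\vx)}\,\norm{f(\vx')}} = \left(\frac{f(\vx)}{\norm{f(\vx)}}\right)^{\!T}\!\left(\frac{f(\vx')}{\norm{f(\vx')}}\right),
\]
i.e. the observation that cosine similarity is the ordinary inner product of the two unit-normalized representations.

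Next I would treat the corpus term and the foil term separately. For the corpus term, the vector $f(\vx)/\norm{f(\vx)}$ is independent of the summation index $\vx^c$, so distributivity of the inner product over the finite sum pulls it outside, giving $\big(f(\vx)/\norm{f(\vx)}\big)^T \big(\tfrac{1}{|\cC|}\sum_{\vx^c\in\cC} f(\vx^c)/\norm{f(\vx^c)}\big)$. For the foil term, $f(\vx)/\norm{f(\vx)}$ is a deterministic vector with respect to the random variable $\vx^v$, so $\E_{\vx^v}\big[(f(\vx)/\norm{f(\vx)})^T (f(\vx^v)/\norm{f(\vx^v)})\big] = (f(\vx)/\norm{f(\vx)})^T \E_{\vx^v}[f(\vx^v)/\norm{f(\vx^v)}]$; moving the constant vector through the expectation is legitimate because every coordinate of a unit-normalized vector lies in $[-1,1]$, so the vector-valued expectation is finite and the identity $\E[\va^T Y]=\va^T\E[Y]$ for deterministic $\va$ applies coordinatewise. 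Subtracting the two rewritten terms and factoring out the common left factor $f(\vx)/\norm{f(\vx)}$ yields exactly the displayed identity. The empirical counterpart \bpropref{prop:empirical_mean_direction} follows by the same computation with $\tfrac{1}{m}\sum_{i=1}^m$ replacing the expectation, where the interchange is pure finite distributivity and needs no integrability remark.

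I do not expect a real obstacle: the proposition is essentially a rephrasing of the definitions once cosine similarity is recognized as an inner product of unit vectors. The only two points deserving a line of care are (i) the implicit standing assumption that $f$ is nonzero on all relevant inputs --- including $\cD_{foil}$-almost surely --- so that the normalizations and hence $s_f$ are well-defined, and (ii) the justification above for exchanging the expectation with the inner product, which the boundedness of normalized representations settles immediately. Everything else is routine algebra, so I would keep the written proof to a few lines.
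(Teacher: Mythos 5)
Your proof is correct and follows essentially the same route as the paper's: both rewrite the cosine similarity as an inner product of unit-normalized representations and then pull the common factor $f(\vx)/\norm{f(\vx)}$ out of the finite corpus average and out of the foil expectation by bilinearity and linearity of expectation. Your added care about integrability and about $f$ being nonzero is harmless extra rigor (and, incidentally, you correctly normalize by $\norm{f(\vx)}$ where the statement's display writes $\norm{\vx}$, evidently a typo carried over from the paper).
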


\section{Extension of \bpropref{prop:empirical_mean_direction} and \bpropref{prop:mean_direction} to general kernel functions}\label{sec:extend_mean_direction}
Let another kernel function $s: \sR^d \times \sR^d \rightarrow \sR$, such that $s_f(\cdot, \cdot) = s(f(\cdot), f(\cdot))$, be the representation similarity measure instead of the cosine similarity. Then, in another latent feature space, the contrastive direction points toward the corpus's mean representation and away from the foil's mean representation. Formally, we have the following proposition.
\begin{proposition}\label{prop:extend_mean_direction}
    Suppose the function $s(\cdot, \cdot)$ is a continuous positive semi-definite kernel on a compact set $\cZ \subset \sR^d$, and $f$ maps from $\cX$ to $\cZ$. Let $L_2(\cZ)$ denote the set of functions over $\cZ$ that are square integrable. Additionally, suppose that the integral operator $T_{s}$, defined as $(T_{s}g)(\cdot) = \int_{\cZ} s(\cdot, \Tilde{\vz})g(\Tilde{\vz})d\Tilde{\vz}$ for all $g \in L_2(\cZ)$, is positive semi-definite:
    \begin{equation}
        \int_{\cZ} \int_{\cZ} s(\vz, \Tilde{\vz}) g(\Tilde{\vz})d\vz d\Tilde{\vz} \ge 0.
    \end{equation}
    Let
    \begin{equation}
        \ccsim(\vx) = \frac{1}{|\cC|} \sum_{\vx^c \in \cC} s(f(\vx), f(\vx^c)) - \E_{\vx^v \sim \cD_{foil}}[s(f(\vx), f(\vx^v))]
    \end{equation}
    and
    \begin{equation}
        \eccsim(\vx) = \frac{1}{|\cC|} \sum_{\vx^c \in \cC} s(f(\vx), f(\vx^c)) - \frac{1}{m} \sum_{i}^m s(f(\vx), f(\vx^{(i)})).
    \end{equation}
    Then
    \begin{equation}
        \ccsim(\vx) = \psi(f(\vx))^T \bigg(\frac{1}{|\cC|} \sum_{\vx^c \in \cC} \psi(f(\vx^c)) - \E_{\vx^v \sim \cD_{foil}}[\psi(f(\vx^v))] \bigg)
    \end{equation}
    and
    \begin{equation}
        \eccsim(\vx) = \psi(f(\vx))^T \bigg(\frac{1}{|\cC|} \sum_{\vx^c \in \cC} \psi(f(\vx^c)) - \frac{1}{m} \sum_{i=1}^m \psi(f(\vx^{(i)})) \bigg),
    \end{equation}
    where $\psi: \cZ \rightarrow \cD$ is a feature map from the representation space to the feature space $\cD$ induced by the kernel function.
\end{proposition}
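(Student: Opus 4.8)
\textbf{Proof proposal for Proposition \ref{prop:extend_mean_direction}.}
The plan is to realise the kernel $s$ as an inner product via its Mercer feature map and then simply use bilinearity of the inner product. First I would invoke Mercer's theorem: since $\cZ \subset \sR^d$ is compact, $s$ is continuous and symmetric, and the associated integral operator $T_s$ is positive semi-definite, $s$ admits an expansion
\begin{equation*}
    s(\vz, \Tilde{\vz}) = \sum_{j \ge 1} \lambda_j e_j(\vz) e_j(\Tilde{\vz}),
\end{equation*}
where $\lambda_j \ge 0$ are the eigenvalues of $T_s$ and $\{e_j\}$ the corresponding orthonormal eigenfunctions in $L_2(\cZ)$, with the series converging absolutely and uniformly on $\cZ \times \cZ$. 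Define $\cD = \ell^2$ and $\psi(\vz) = \big(\sqrt{\lambda_j}\, e_j(\vz)\big)_{j \ge 1}$. Then $\|\psi(\vz)\|^2 = \sum_j \lambda_j e_j(\vz)^2 = s(\vz, \vz)$, which is bounded by $M := \sup_{\vz \in \cZ} s(\vz, \vz) < \infty$ by continuity on the compact set $\cZ$, so $\psi$ indeed maps into $\cD$ and $s(\vz, \Tilde{\vz}) = \psi(\vz)^T \psi(\Tilde{\vz})$ for all $\vz, \Tilde{\vz} \in \cZ$.

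Next I would substitute $s_f(\cdot, \cdot) = s(f(\cdot), f(\cdot)) = \psi(f(\cdot))^T \psi(f(\cdot))$ into the two definitions. For the empirical estimator the argument is purely algebraic: every term is a finite sum of inner products sharing the left factor $\psi(f(\vx))$, so bilinearity gives
\begin{equation*}
    \eccsim(\vx) = \psi(f(\vx))^T \bigg(\frac{1}{|\cC|} \sum_{\vx^c \in \cC} \psi(f(\vx^c)) - \frac{1}{m} \sum_{i=1}^m \psi(f(\vx^{(i)})) \bigg),
\end{equation*}
which is the second claimed identity. For the population version one has to factor $\psi(f(\vx))$ out of the expectation $\E_{\vx^v \sim \cD_{foil}}[\psi(f(\vx))^T \psi(f(\vx^v))]$; this is where the measure-theoretic care is needed. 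Since $\vz \mapsto \psi(\vz)$ is continuous (uniform convergence of continuous partial sums) and $f$ is measurable, $\vx^v \mapsto \psi(f(\vx^v))$ is a measurable $\cD$-valued map with $\|\psi(f(\vx^v))\| \le \sqrt{M}$ almost surely, hence Bochner integrable, so $\vmu := \E_{\vx^v \sim \cD_{foil}}[\psi(f(\vx^v))] \in \cD$ is well defined; and for any fixed $u \in \cD$ the bounded linear functional $\langle u, \cdot \rangle$ commutes with the Bochner integral, giving $\E[u^T \psi(f(\vx^v))] = u^T \vmu$. Applying this with $u = \psi(f(\vx))$ and combining with the (finite) corpus sum yields
\begin{equation*}
    \ccsim(\vx) = \psi(f(\vx))^T \bigg(\frac{1}{|\cC|} \sum_{\vx^c \in \cC} \psi(f(\vx^c)) - \E_{\vx^v \sim \cD_{foil}}[\psi(f(\vx^v))] \bigg),
\end{equation*}
the first claimed identity.

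The main obstacle I anticipate is not the algebra but the analytic bookkeeping around the Mercer decomposition and the exchange of expectation with the inner product: one must check that the hypotheses (continuity, compactness, positive semi-definiteness of $T_s$) are exactly what is needed to get a legitimate feature map into a genuine Hilbert space, and that the resulting $\psi$-valued integrand is integrable so that $\langle \psi(f(\vx)), \E[\psi(f(\vx^v))]\rangle = \E[\langle \psi(f(\vx)), \psi(f(\vx^v))\rangle]$ holds. Uniform boundedness of $\|\psi\|$ on the compact set $\cZ$ is the crucial fact that makes all of this routine; once it is in hand, the two displayed equalities follow immediately by linearity.
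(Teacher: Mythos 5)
Your proposal is correct and follows essentially the same route as the paper's proof: invoke Mercer's theorem to realize $s(\vz,\Tilde{\vz}) = \psi(\vz)^T\psi(\Tilde{\vz})$ and then factor $\psi(f(\vx))$ out of the sums and the expectation by (bi)linearity. The only difference is that you spell out the construction of $\psi$ and the Bochner-integrability justification for exchanging the expectation with the inner product, which the paper leaves implicit.
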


\section{Contrastive corpus similarity when the corpus consists of multiple sub-corpora}
Here, we consider the scenario where a corpus consists of multiple groups (i.e., \textit{sub-corpora}) with different characteristics. In this setting, the contrastive corpus similarity and its empirical estimator are a weighted average of the contrastive sub-corpus similarities and a weighted average of the empirical estimators of contrastive sub-corpus similarities, respectively. Importantly, the weight corresponding to each sub-corpus is proportional to its size. This insight is formalized in the following proposition.

\begin{proposition}\label{prop:multiple_subcorpora}
    Suppose a corpus $\cC$ consists of disjoint sub-corpora $\cC_k$ for $k = 1, ..., K$, for some integer $K > 0$. That is, $\cC = \bigcup_{k=1}^K \cC_k$, and $\cC_i \cap \cC_j = \emptyset$ for all $i \neq j$. Then the contrastive corpus similarity with respect to the corpus $\cC$ is a weighted average of contrastive corpus similarities with respect to the sub-corpora:
    \begin{equation}
        \ccsim(\vx) = \sum_{k=1}^K \frac{|\cC_k|}{|\cC|} \gamma_{f, \cC_k, \cD_{foil}}(\vx).
    \end{equation}
    Furthermore, the empirical estimator of contrastive corpus similarity with respect to the corpus $\cC$ is a weighted average of empirical estimators of contrastive corpus similarity with respect to the sub-corpora:
    \begin{equation}
        \eccsim(\vx) = \sum_{k=1}^K \frac{|\cC_k|}{|\cC|} \hat{\gamma}_{f, \cC_k, \cF}(\vx).
    \end{equation}
\end{proposition}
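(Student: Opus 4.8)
The plan is to reduce everything to \bdefref{def:contrast_corpus_sim} and exploit two elementary facts about the partition: first, that disjointness gives $|\cC| = \sum_{k=1}^K |\cC_k|$ and lets us split the corpus sum as $\sum_{\vx^c \in \cC} s_f(\vx, \vx^c) = \sum_{k=1}^K \sum_{\vx^c \in \cC_k} s_f(\vx, \vx^c)$ with no term counted twice; second, that the weights $|\cC_k| / |\cC|$ sum to one.

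First I would expand the corpus-similarity term of $\ccsim(\vx)$. Using disjointness,
\begin{equation*}
\frac{1}{|\cC|} \sum_{\vx^c \in \cC} s_f(\vx, \vx^c) = \sum_{k=1}^K \frac{1}{|\cC|} \sum_{\vx^c \in \cC_k} s_f(\vx, \vx^c) = \sum_{k=1}^K \frac{|\cC_k|}{|\cC|} \cdot \frac{1}{|\cC_k|} \sum_{\vx^c \in \cC_k} s_f(\vx, \vx^c).
\end{equation*}
Next I would rewrite the foil term by inserting the identity $1 = \sum_{k=1}^K |\cC_k| / |\cC|$:
\begin{equation*}
\E_{\vx^v \sim \cD_{foil}}[s_f(\vx, \vx^v)] = \sum_{k=1}^K \frac{|\cC_k|}{|\cC|} \, \E_{\vx^v \sim \cD_{foil}}[s_f(\vx, \vx^v)].
\end{equation*}
Subtracting the second display from the first and collecting the $k$-th summands yields
\begin{equation*}
\ccsim(\vx) = \sum_{k=1}^K \frac{|\cC_k|}{|\cC|} \left( \frac{1}{|\cC_k|} \sum_{\vx^c \in \cC_k} s_f(\vx, \vx^c) - \E_{\vx^v \sim \cD_{foil}}[s_f(\vx, \vx^v)] \right) = \sum_{k=1}^K \frac{|\cC_k|}{|\cC|} \gamma_{f, \cC_k, \cD_{foil}}(\vx),
\end{equation*}
which is the first claimed identity.

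For the empirical estimator I would repeat the argument verbatim, the only change being that the population foil term $\E_{\vx^v \sim \cD_{foil}}[s_f(\vx, \vx^v)]$ is replaced by the sample average $\frac{1}{m} \sum_{i=1}^m s_f(\vx, \vx^{(i)})$. Since the same fixed foil set $\cF$ is used for $\eccsim$ and for each $\hat{\gamma}_{f, \cC_k, \cF}$, the redistribution of the shared foil term via $1 = \sum_{k=1}^K |\cC_k| / |\cC|$ goes through unchanged, giving $\eccsim(\vx) = \sum_{k=1}^K \frac{|\cC_k|}{|\cC|} \hat{\gamma}_{f, \cC_k, \cF}(\vx)$.

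There is no real obstacle here; the proof is a direct computation. The only points that require any care are (i) invoking disjointness so that the corpus sum splits cleanly and $|\cC| = \sum_{k=1}^K |\cC_k|$, and (ii) noting that the weights $|\cC_k| / |\cC|$ sum to one so that the shared foil term can be redistributed across the sub-corpus terms without changing its value.
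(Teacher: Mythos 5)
Your proof is correct and follows essentially the same route as the paper's: split the corpus sum over the disjoint sub-corpora, redistribute the shared foil term using $\sum_{k=1}^K |\cC_k|/|\cC| = 1$, and regroup; the empirical case is identical with the sample average in place of the expectation. No gaps.
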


\section{Proofs}\label{sec:proofs}
\subsection{Proof of \bpropref{prop:unbiased}}\label{apdx:proof_unbiased}
\begin{proof}
    We have
    \begin{align}
        \E_{\vx^{(1)}, ..., \vx^{(m)} \iid \cD_{foil}}[\eccsim(\vx)] 
        &= \frac{1}{|\cC|} \sum_{\vx^c \in \cC} s_f(\vx, \vx^c) - \frac{1}{m} \sum_{i=1}^m  \E_{\vx^{(1)}, ..., \vx^{(m)} \iid \cD_{foil}}[s_f(\vx, \vx^{(i)})] \\
        &= \frac{1}{|\cC|} \sum_{\vx^c \in \cC} s_f(\vx, \vx^c) - \E_{x_v \sim \cD_{foil}}[s_f(\vx, \vx^v)] = \ccsim(\vx),
    \end{align}
    where (H.1) follows from the linearity of expectation, and (H.2) from the fact that $\vx^{(1)}, ..., \vx^{(m)}$ are identically distributed from $\cD_{foil}$.
\end{proof}

\subsection{Proof of \bpropref{prop:sample_size}}
We first state Hoeffding's inequality.
\begin{lemma}\label{lemma:hoeffding}
    Let $X_1, ..., X_n$ be independent random variables such that $ a_i \le X_i \le b_i$ almost surely. Consider $S_n = X_1 + \cdots + X_n$, then
    \begin{equation}
        P(|S_n - \E[S_n]| \ge \varepsilon) \le 2 \exp\bigg(-\frac{2\varepsilon^2}{\sum_{i=1}^n (b_i - a_i)^2}\bigg).
    \end{equation}
\end{lemma}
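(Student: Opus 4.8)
The plan is to prove Hoeffding's inequality by the classical Cramér--Chernoff (exponential moment) method, reducing the two-sided tail bound to a one-sided bound on the centered sum together with a symmetry argument. First I would center each variable: set $Y_i = X_i - \E[X_i]$, so that $\E[Y_i] = 0$, each $Y_i$ takes values in an interval of width $b_i - a_i$, and $S_n - \E[S_n] = \sum_{i=1}^n Y_i$. For any $\lambda > 0$, Markov's inequality applied to the nonnegative random variable $e^{\lambda(S_n - \E[S_n])}$ yields
$$P(S_n - \E[S_n] \ge \varepsilon) \le e^{-\lambda\varepsilon}\, \E\big[e^{\lambda \sum_i Y_i}\big] = e^{-\lambda\varepsilon} \prod_{i=1}^n \E\big[e^{\lambda Y_i}\big],$$
where the factorization into a product uses the independence of the $Y_i$.

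The crux of the argument is a bound on each moment generating factor, namely \emph{Hoeffding's lemma}: for a mean-zero random variable $Y$ supported on $[a,b]$ one has $\E[e^{\lambda Y}] \le \exp(\lambda^2(b-a)^2/8)$. I would prove this using convexity of the exponential, writing $e^{\lambda y} \le \frac{b-y}{b-a}e^{\lambda a} + \frac{y-a}{b-a}e^{\lambda b}$ for $y \in [a,b]$, taking expectations, and then analyzing the cumulant function $\psi(\lambda) = \log \E[e^{\lambda Y}]$. Verifying $\psi(0) = \psi'(0) = 0$ and bounding $\psi''(\lambda) \le (b-a)^2/4$ (recognizing $\psi''(\lambda)$ as the variance under the exponentially tilted measure, which is still supported on $[a,b]$ and hence has variance at most $(b-a)^2/4$), a second-order Taylor expansion gives $\psi(\lambda) \le \lambda^2(b-a)^2/8$. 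This is the step I expect to be the main obstacle, since it requires the careful cumulant/second-derivative computation rather than a routine manipulation.

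With Hoeffding's lemma in hand, applying it to each $Y_i$ (of width $b_i - a_i$) gives
$$P(S_n - \E[S_n] \ge \varepsilon) \le \exp\Big(-\lambda\varepsilon + \frac{\lambda^2}{8}\sum_{i=1}^n (b_i - a_i)^2\Big).$$
I would then minimize the exponent over $\lambda > 0$: the quadratic is minimized at $\lambda = 4\varepsilon / \sum_i (b_i - a_i)^2$, producing the one-sided bound $\exp\big(-2\varepsilon^2 / \sum_i (b_i - a_i)^2\big)$. Finally, running the identical argument for $-X_i$ yields the matching lower-tail bound on $P(S_n - \E[S_n] \le -\varepsilon)$, and a union bound over the events $\{S_n - \E[S_n] \ge \varepsilon\}$ and $\{S_n - \E[S_n] \le -\varepsilon\}$ introduces the factor of $2$ and completes the proof.
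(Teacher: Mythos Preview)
Your proposal is correct and follows the classical Cram\'er--Chernoff argument (Markov on the MGF, Hoeffding's lemma on each factor, optimization over $\lambda$, union bound for the two tails). The paper, however, does not actually prove this lemma: its entire proof is the single line ``See \cite{hoeffding1963desigualdades},'' so your write-up already supplies strictly more detail than the paper itself.
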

\begin{proof}
    See \cite{hoeffding1963desigualdades}.
\end{proof}
We now proceed to prove \bpropref{prop:sample_size}.
\begin{proof}
    First, we have
    \begin{equation}
        P(|\eccsim(\vx) - \ccsim(\vx)| \ge \varepsilon) = P\Bigg(\Bigg|\sum_{i=1}^m \frac{s_f(\vx, \vx^{(i)})}{m} - \E_{\vx^v \sim \cD_{foil}}[s_f(\vx, \vx^v)]\Bigg| \ge \epsilon \Bigg)
    \end{equation}
    and note that
    \begin{equation}
        \E_{\vx^v \sim \cD_{foil}}[s_f(\vx, \vx^v)] = \E_{\vx^{(1)}, ..., \vx^{(m)} \iid \cD_{foil}}\Bigg[\sum_{i=1}^m \frac{s_f(\vx, \vx^{(i)})}{m}\Bigg].
    \end{equation}
    Because the cosine similarity is bounded: $-1 \le s_f(\cdot, \cdot) \le 1$, it follows that $-1/m \le s_f(\cdot, \cdot)/m \le 1/m$. Applying the Hoeffding's inequality, we obtain
    \begin{align}
        P(|\eccsim(\vx) - \ccsim(\vx)| \ge \varepsilon) &\le 2 \exp\bigg(-\frac{2 \varepsilon^2}{\sum_{i=1}^m (2 / m)^2}\bigg) \\
        &= 2 \exp\bigg(-\frac{\varepsilon^2}{2} \cdot m\bigg) \\
        &\le 2 \exp\bigg(-\frac{\varepsilon^2}{2} \cdot \frac{2\log(2 / \delta)}{\varepsilon^2}\bigg) = \delta.
    \end{align}
    
    If it is further assumed that $f$ maps from $\cX$ to $\sR_+^d$, then $0 \le s_f(\cdot, \cdot) \le 1$ and hence $0 \le s_f(\cdot, \cdot)/m \le 1/m$. Applying the Hoeffding's inequality with $m \ge \frac{\log(2/\delta)}{2\varepsilon^2}$ yields
    \begin{align}
        P(|\eccsim(\vx) - \ccsim(\vx)| \ge \varepsilon) &\le 2 \exp\bigg(-\frac{2 \varepsilon^2}{\sum_{i=1}^m (1 / m)^2}\bigg) \\
        &= 2 \exp(-2 \varepsilon^2 \cdot m) \\
        &\le 2 \exp\bigg(-2\varepsilon^2 \cdot \frac{\log(2 / \delta)}{2 \varepsilon^2}\bigg) = \delta.
    \end{align}
\end{proof}

\subsection{Proof of \bpropref{prop:empirical_mean_direction} and \bpropref{prop:mean_direction}}
\begin{proof}
    By the definition of $\eccsim(\vx)$ and linear algebra, we have
    \begin{align}
        \ecsim(\vx)
        &= \frac{1}{|\cC|} \sum_{\vx^c \in \cC} \bigg(\normv{f(\vx)}\bigg)^T \normv{f(\vx^c)} - \frac{1}{m} \sum_{i=1}^m \bigg(\normv{f(\vx)}\bigg)^T \normv{f(\vx^{(i)})} \\
        &= \bigg(\normv{f(\vx)}\bigg)^T \bigg( \frac{1}{|\cC|} \sum_{\vx_x \in \cC} \normv{f(\vx^c)} \bigg)- \bigg(\normv{f(\vx)}\bigg)^T \bigg(\frac{1}{m} \sum_{i=1}^m \normv{f(\vx^{(i)})}\bigg) \\
        &= \bigg(\normv{f(\vx)}\bigg)^T \bigg(\frac{1}{|\cC|} \sum_{\vx^c \in \cC} \normv{f(\vx^c)} - \frac{1}{m} \sum_{i=1}^m \normv{f(\vx^{(i)})}\bigg).
    \end{align}
    By the definition of $\ccsim(\vx)$, linear algebra, and the linearity of expectation, we have
    \begin{align}
        \ccsim(\vx) 
        &= \frac{1}{|\cC|} \sum_{\vx^c \in \cC} \bigg(\normv{f(\vx)}\bigg)^T \normv{f(\vx^c)} - \E_{\vx^v \sim \cD_{foil}}\Bigg[ \bigg(\normv{f(\vx)}\bigg)^T \normv{f(\vx^v)} \Bigg] \\
        &= \bigg(\normv{f(\vx)}\bigg)^T \bigg( \frac{1}{|\cC|} \sum_{\vx^c \in \cC} \normv{f(\vx^c)} \bigg) - \bigg(\normv{f(\vx)}\bigg)^T \E_{\vx^v \sim \cD_{foil}}\bigg[ \frac{f(\vx^v)}{\norm{\vx^v}} \bigg] \\
        &= \bigg(\normv{f(\vx)}\bigg)^T \bigg(\frac{1}{|\cC|} \sum_{\vx^c \in \cC} \normv{f(\vx^c)} - \E_{\vx^v \sim \cD_{foil}} \bigg[ \normv{f(\vx^v)} \bigg] \bigg).
    \end{align}
\end{proof}

\subsection{Proof of \bpropref{prop:extend_mean_direction}}
\begin{proof}
    By Mercer's Theorem \citep{mercer1909xvi} (see \cite{ghojogh2021reproducing} for an accessible survey relevant to machine learning), we have
    \begin{align}
        \ccsim(\vx) 
        &= \frac{1}{|\cC|} \sum_{\vx^c \in \cC} \psi(f(\vx))^T \psi(f(\vx^c)) - \E_{\vx^v \sim \cD_{foil}}[\psi(f(\vx))^T \psi(f(\vx^v))] \\
        &= \psi(f(\vx))^T \bigg( \frac{1}{|\cC|} \sum_{\vx^c \in \cC} \psi(f(\vx^c)) \bigg) - \psi(f(\vx))^T \E_{\vx^v \sim \cD_{foil}}[\psi(f(\vx^v))] \\
        &= \psi(f(\vx))^T \bigg(\frac{1}{|\cC|} \sum_{\vx^c \in \cC} \psi(f(\vx^c)) - \E_{\vx^v \sim \cD_{foil}}[\psi(f(\vx^v))] \bigg),
    \end{align}
    Similarly, we have
    \begin{align}
        \eccsim(\vx)
        &= \frac{1}{|\cC|} \sum_{\vx^c \in \cC} \psi(f(\vx))^T \psi(f(\vx^c)) - \frac{1}{m} \sum_{i=1}^m \psi(f(\vx))^T \psi(f(\vx^{(i)})) \\
        &= \psi(f(\vx))^T \bigg( \frac{1}{|\cC|} \sum_{\vx^c \in \cC} \psi(f(\vx^c)) \bigg) - \psi(f(\vx))^T \frac{1}{m} \bigg( \sum_{i=1}^m \psi(f(\vx^{(i)})) \bigg) \\
        &= \psi(f(\vx))^T \bigg(\frac{1}{|\cC|} \sum_{\vx^c \in \cC} \psi(f(\vx^c)) - \frac{1}{m} \sum_{i=1}^m \psi(f(\vx^{(i)})) \bigg).
    \end{align}
\end{proof}

\subsection{Proof of \bpropref{prop:multiple_subcorpora}}
\begin{proof}
    Because the sub-corpora are disjoint sets, we have
    \begin{align}
        \ccsim(\vx)
            &= \frac{1}{|\cC|} \sum_{k=1}^K \sum_{\vx^c \in \cC_k} s_f(\vx, \vx^c) - \E_{\vx^v \sim \cD_{foil}}[s_f(\vx, \vx^v)] \\
            &= \sum_{k=1}^K \frac{|\cC_k|}{|C|} \frac{1}{|\cC_k|} \sum_{\vx^c \in \cC_k} s_f(\vx, \vx^c) - \E_{\vx^v \sim \cD_{foil}}[s_f(\vx, \vx^v)] \\
            &= \sum_{k=1}^K \frac{|\cC_k|}{|C|} \frac{1}{|\cC_k|} \sum_{\vx^c \in \cC_k} s_f(\vx, \vx^c) - \sum_{k=1}^K \frac{|\cC_k|}{|\cC|} \E_{\vx^v \sim \cD_{foil}}[s_f(\vx, \vx^v)] \\
            &= \sum_{k=1}^{K} \frac{|\cC_k|}{|\cC|} \Bigg(\frac{1}{|\cC_k|}\sum_{\vx^c \in \cC_k} s_f(\vx, \vx^c) - \E_{\vx^v \sim \cD_{foil}}[s_f(\vx, \vx^v)] \Bigg) \\
            &= \sum_{k=1}^K \frac{|\cC_k|}{|\cC|} \gamma_{f, \cC_k, \cD_{foil}}(\vx).
    \end{align}
    Similarly,
    \begin{align}
        \eccsim(\vx)
            &= \frac{1}{|\cC|} \sum_{k=1}^K \sum_{\vx^c \in \cC_k} s_f(\vx, \vx^c) - \frac{1}{m} \sum_{i=1}^m s_f(\vx, \vx^{(i)}) \\
            &= \sum_{k=1}^K \frac{|\cC_k|}{|C|} \frac{1}{|\cC_k|} \sum_{\vx^c \in \cC_k} s_f(\vx, \vx^c) - \frac{1}{m} \sum_{i=1}^m s_f(\vx, \vx^{(i)}) \\
            &= \sum_{k=1}^K \frac{|\cC_k|}{|C|} \frac{1}{|\cC_k|} \sum_{\vx^c \in \cC_k} s_f(\vx, \vx^c) - \sum_{k=1}^K \frac{|\cC_k|}{|\cC|} \Bigg( \frac{1}{m} \sum_{i=1}^m s_f(\vx, \vx^{(i)}) \Bigg)\\
            &= \sum_{k=1}^{K} \frac{|\cC_k|}{|\cC|} \Bigg(\frac{1}{|\cC_k|}\sum_{\vx^c \in \cC_k} s_f(\vx, \vx^c) - \frac{1}{m} \sum_{i=1}^m s_f(\vx, \vx^{(i)}) \Bigg) \\
            &= \sum_{k=1}^K \frac{|\cC_k|}{|\cC|} \hat{\gamma}_{f, \cC_k, \cF}(\vx).
    \end{align}
\end{proof}

\section{Experiment details}\label{sec:experiment_details}

Code is available at \url{https://github.com/suinleelab/cl-explainability}.

\subsection{Datasets}\label{sec:experiment_details_datasets}
\textbf{ImageNet.} The ImageNet ILSVRC dataset contains 1.2 million labeled training images and 50,000 labeled validation images over 1,000 object classes \citep{russakovsky2015imagenet}. Since the SimCLR model and its downstream linear classifier are trained and tuned with only the training set, we use the validation set as a held-out set for quantitative evaluation (Section \ref{sec:quant_eval}) and understanding data augmentations (Section \ref{sec:data_aug}). For computational feasibility, a subset of 10 easily classified classes called ImageNette (including tench, English springer, cassette player, chainsaw, church, French horn, garbage truck, gas pump, golf ball, and parachute) \citep{howard2019imagenette} are used for quantitative evaluation.

\textbf{CIFAR-10.} The CIFAR-10 dataset consists of 50,000 labeled training images and 10,000 labeled test images of size $32 \times 32$ over 10 classes. The test set is used for quantitative evaluation.

\textbf{MURA.} The MURA (MUsculoskeletal RAdiographs) dataset contains 36,808 training radiograph images from 13,457 musculoskeletal studies and 3,197 validation images from 1,199 studies. Each study and its associated images are labeled by radiologists as either normal or abnormal. We further split the official training images into an 80\% subset for our model training and a 20\% subset for hyperparameter tuning. The official validation set is held out and used for quantitative evaluation.

\subsection{Models}\label{sec:experiment_details_models}
\textbf{SimCLR.} We obtained the weights of SimCLRv1 (with a ResNet50 backbone) and its downstream linear classifier, pre-trained with ImageNet, from \url{https://github.com/google-research/simclr} by \citet{chen2020simple}. The pre-trained SimCLR model and linear classifier were converted to the PyTorch format following the conversion in \url{https://github.com/tonylins/simclr-converter}, as recommended in the SimCLR authors' official documentation.

\textbf{SimSiam.} We trained a SimSiam model (with a ResNet18 backbone) for CIFAR-10 following the training procedure and hyperparameters outlined in \citet{chen2021exploring}, using the implementation in \url{https://github.com/Reza-Safdari/SimSiam-91.9-top1-acc-on-CIFAR10}. The downstream linear classifier was trained with stochastic gradient descent with a learning rate of $30.0$ and a batch size of $256$ to achieve a top-1 accuracy of 92.06\% on the CIFAR-10 test set.

\textbf{ResNet classifier.} We trained a ResNet18 classifier for normal vs. abnormal radiograph images with an 80\% training subset and tuned hyperparameters with the other 20\%. We randomly augmented the data during training by horizontal flipping, vertical flipping, and rotation between $-30^{\circ}$ and $30^{\circ}$. The classifier was trained with an Adam optimizer \citep{kingma2014adam}, with a batch size of $256$, a learning rate decay of $0.1$ every 10 steps, and a weight decay of $0.001$. The initial learning rate was tuned over $\{0.1, 0.01, 0.001, 0.0001\}$ to identify the optimal initial learning rate of $0.001$. The trained ResNet18 classifier achieves an accuracy of 80.54\% on the held-out official validation set.

\subsection{Feature attribution methods}\label{sec:experiment_details_attributions}

\textbf{Integrated Gradients.} Integrated Gradients \citep{sundararajan2017axiomatic} with $50$ steps for the Riemman approximation of integral were run in this work. The implementation in the Captum package was used \citep{kokhlikyan2020captum}. The baseline values correspond to the explicand image with blurring.

\textbf{GradientSHAP.} GradientSHAP with $50$ random Gaussian noise samples having a standard deviation of $0.2$ for the gradient expectation estimation was run. The implementation in the Captum package was used \citep{kokhlikyan2020captum}. The baseline values correspond to the explicand image with blurring.

\textbf{RISE.} RISE \citep{petsiuk2018rise} with $5000$ random masks generated with a masking probability of $0.5$ was run. For CIFAR-10, each binary mask before upsampling had size $4 \times 4$. For ImageNet and MURA, each initial binary mask was $7 \times 7$. The baseline values for replacing masked pixels correspond to the explicand image with blurring.

\section{Corpus size sensitivity analysis}\label{sec:corpus_size_sensitivity}

\begin{figure}[H]
\centering
\includegraphics[width=0.82\textwidth]{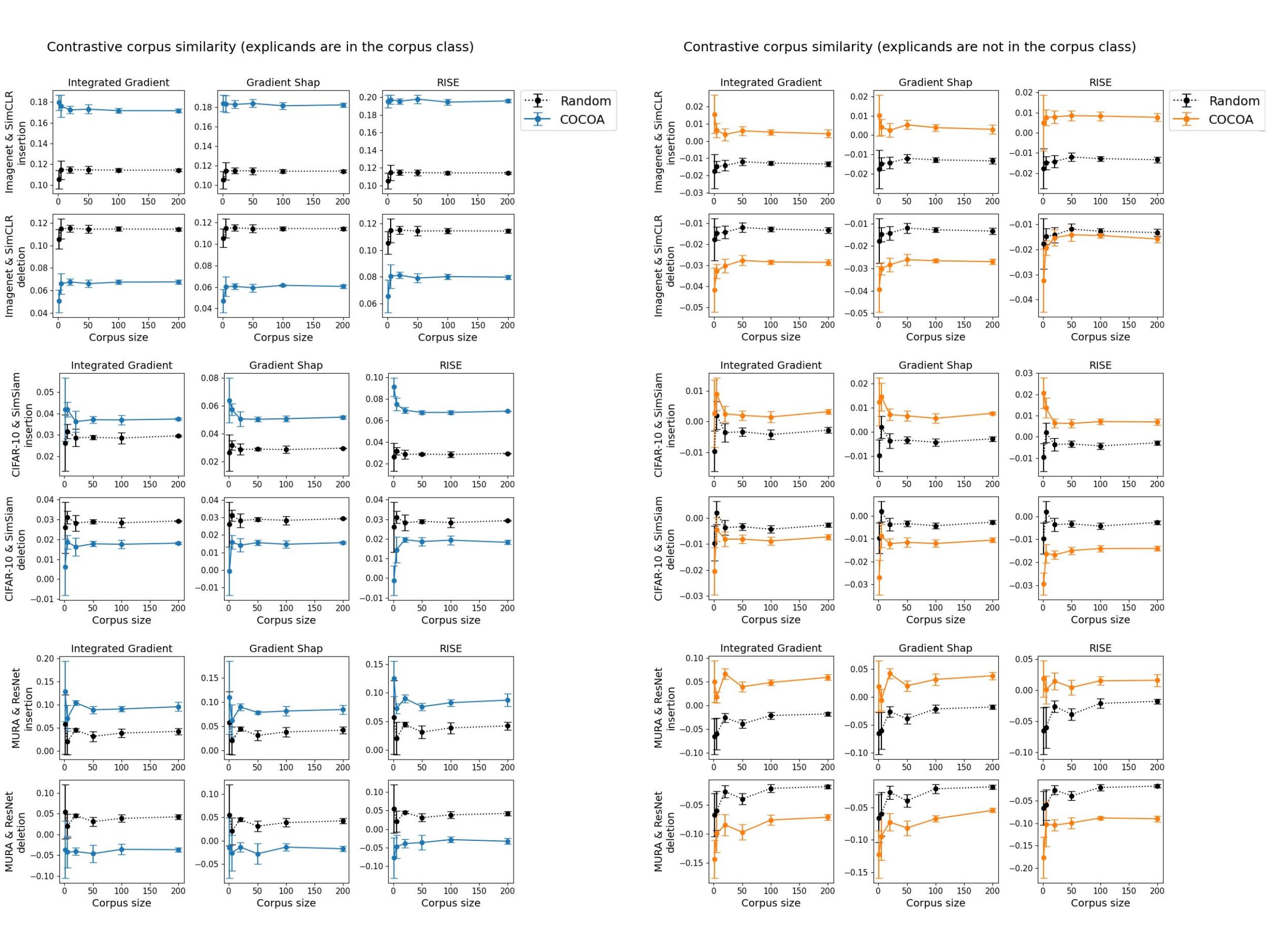}
\caption{Insertion and deletion metrics of contrastive corpus similarity for COCOA with corpus size $= 1, 5, 20, 50, 100, 200$. The foil size is fixed at $1500$. Performance results of random attributions are plotted as benchmarks. Means and 95\% confidence intervals across 5 experiment runs are shown.}
\label{fig:corpus_size_contrastive_corpus_similarity}
\end{figure}

\begin{figure}[H]
\centering
\includegraphics[width=0.82\textwidth]{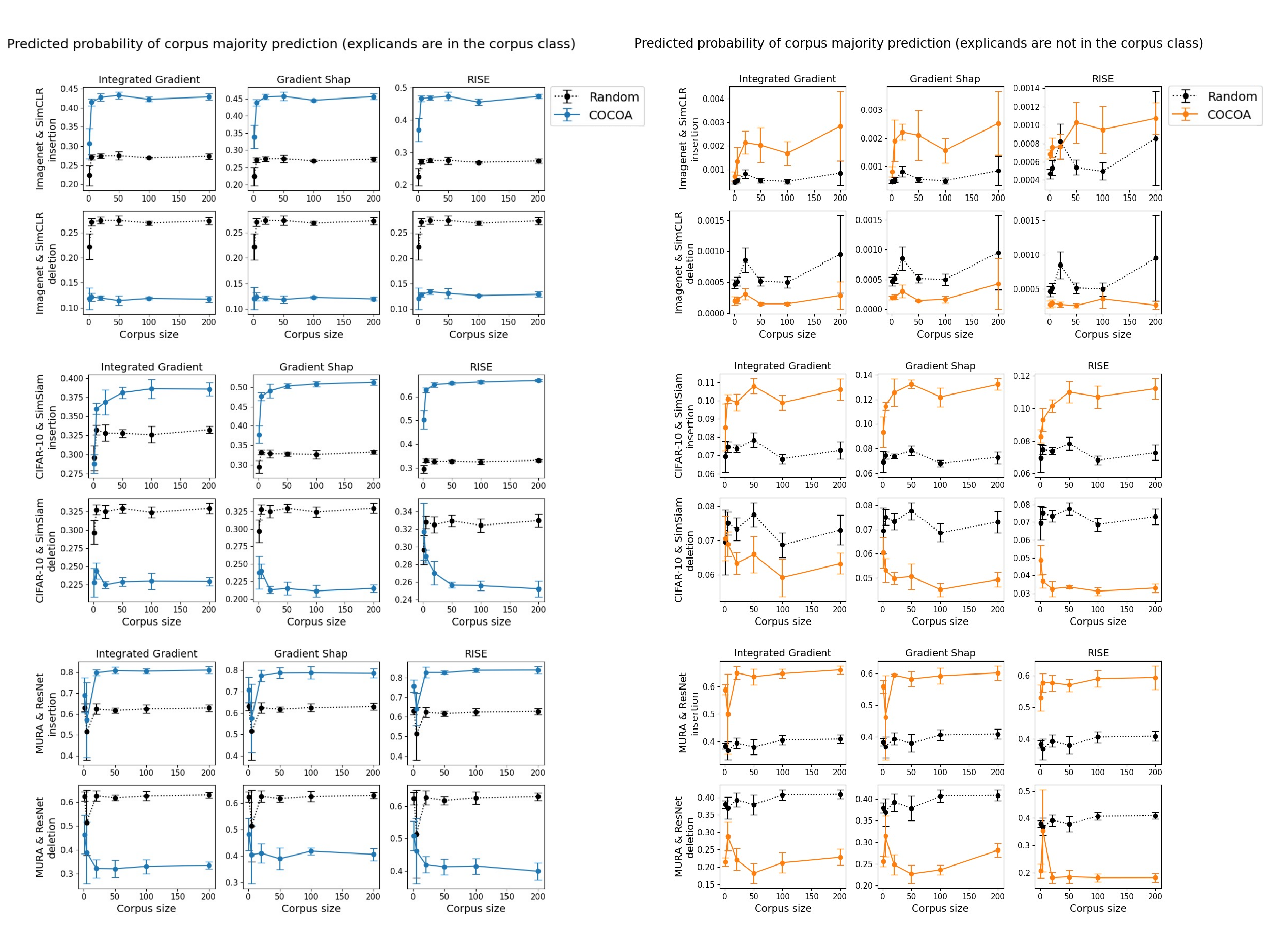}
\caption{Insertion and deletion metrics of corpus majority probability for COCOA with corpus size $= 1, 5, 20, 50, 100, 200$. The foil size is fixed at $1500$. Performance results of random attributions are plotted as benchmarks. Means and 95\% confidence intervals across 5 experiment runs are shown.}
\label{fig:corpus_size_corpus_majority_prob}
\end{figure}

\section{Foil size sensitivity analysis}\label{sec:foil_size_sensitivity}

\begin{figure}[H]
\centering
\includegraphics[width=0.82\textwidth]{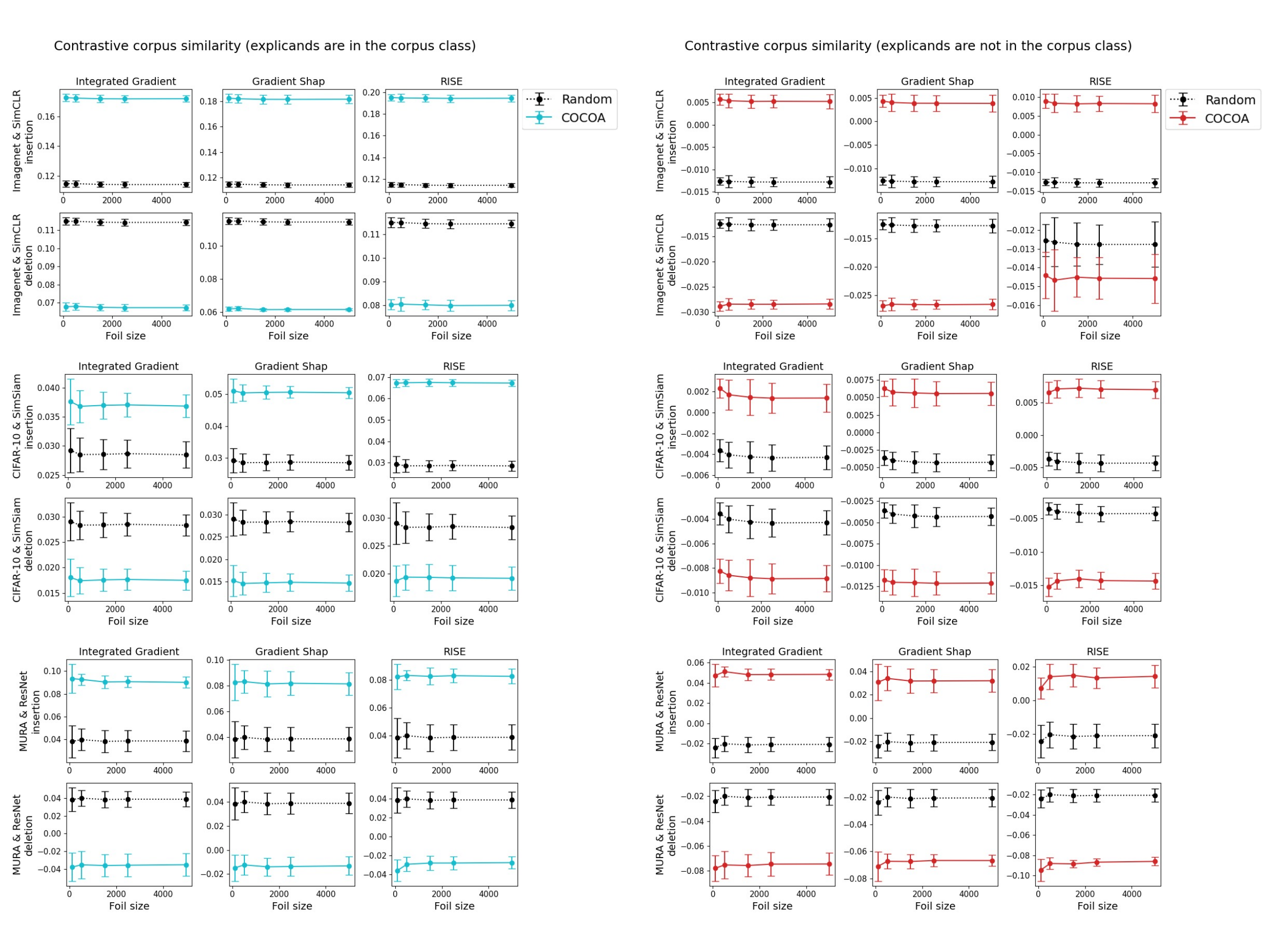}
\caption{Insertion and deletion metrics of contrastive corpus similarity for COCOA with foil size $= 100, 500, 1500, 2500, 5000$. The corpus size is fixed at $100$. Performance results of random attributions are plotted as benchmarks. Means and 95\% confidence intervals across 5 experiment runs are shown.}
\label{fig:foil_size_contrastive_corpus_similarity}
\end{figure}

\begin{figure}[H]
\centering
\includegraphics[width=0.82\textwidth]{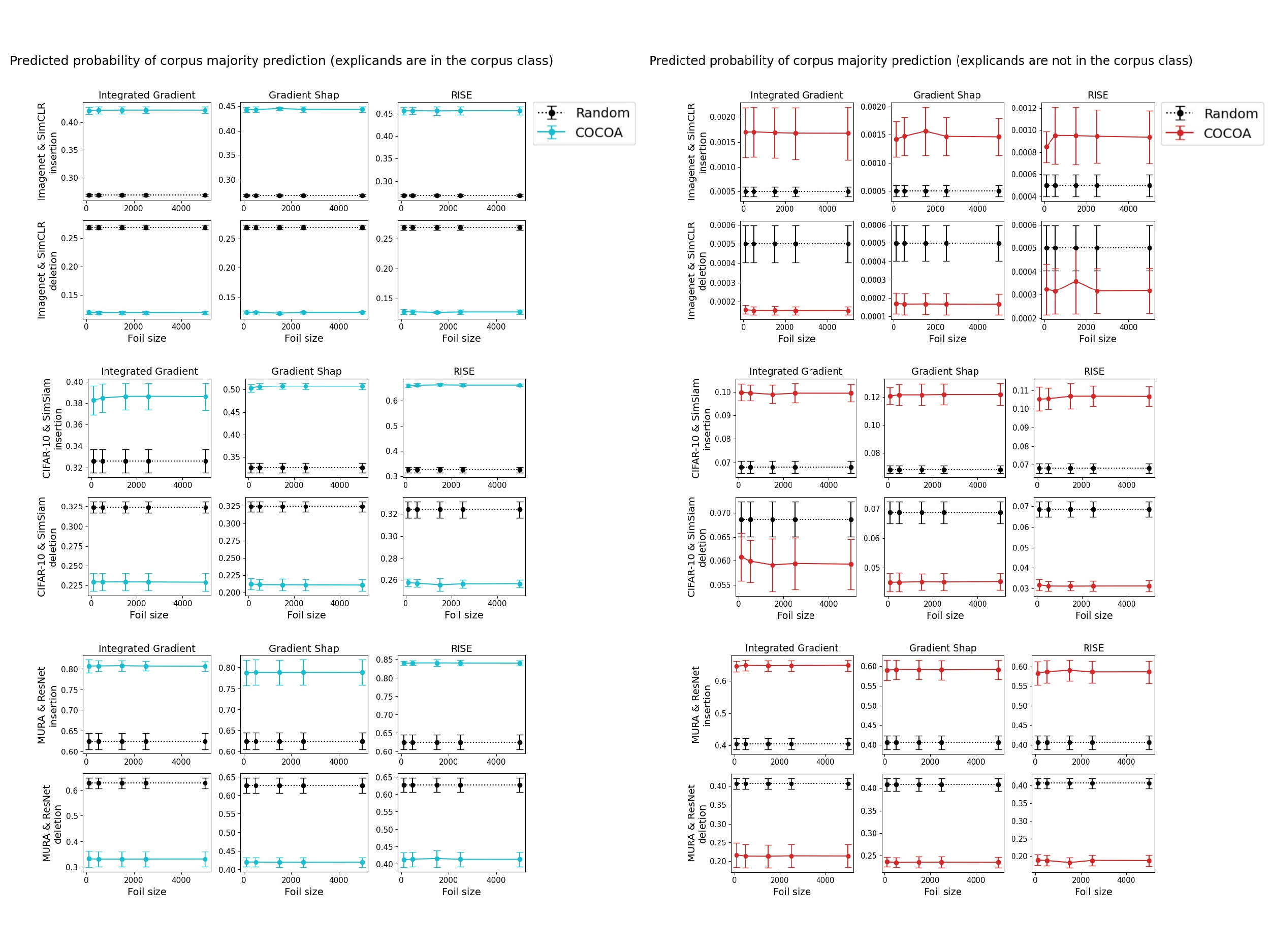}
\caption{Insertion and deletion metrics of corpus majority probability for COCOA with foil size $= 100, 500, 1500, 2500, 5000$. The corpus size is fixed at $100$. Performance results of random attributions are plotted as benchmarks. Means and 95\% confidence intervals across 5 experiment runs are shown.}
\label{fig:foil_size_corpus_majority_prob}
\end{figure}

\section{Sanity check results}\label{sec:quant_eval_sanity_checks}

\begin{table}[H]
    \centering
    \caption{Insertion and deletion metrics of contrastive corpus similarity when explicands belong to the corpus class. Means (95\% confidence intervals) across 5 experiment runs are reported. Higher insertion and lower deletion values indicate better performance, respectively. Each method is a combination of a feature attribution method and an explanation target function (e.g., COCOA under RISE corresponds to feature attributions computed by RISE for the contrastive corpus similarity). Performance results of random attributions (last row) are included as benchmarks.}
    \resizebox{\textwidth}{!}{
    \begin{tabular}{lcccccc}\toprule
    Attribution Method & \multicolumn{2}{c}{Imagenet \& SimCLR} & \multicolumn{2}{c}{CIFAR-10 \& SimSiam} & \multicolumn{2}{c}{MURA \& ResNet} \\
    \cmidrule(lr){2-3} \cmidrule(lr){4-5} \cmidrule(lr){6-7}
    & Insertion ($\uparrow$) & Deletion ($\downarrow$) & Insertion ($\uparrow$) & Deletion ($\downarrow$) & Insertion ($\uparrow$) & Deletion ($\downarrow$) \\
    \midrule
    Integrated Gradients &&&&&& \\
    \midrule
    Label-Free & 0.154 (0.003) & 0.076 (0.001) & 0.037 (0.002) & 0.021 (0.002) & 0.046 (0.010) & 9.77e-03 $\pm$ 1.15e-02 \\
    Contrastive Label-Free & 0.161 (0.003) & 0.071 (0.002) & 0.036 (0.003) & 0.020 (0.002) & 0.058 (0.007) & -4.26e-03 $\pm$ 1.20e-02 \\
    Corpus & 0.157 (0.002) & 0.081 (0.002) & 0.033 (0.002) & 0.020 (0.002) & 0.051 (0.007) & 9.39e-03 $\pm$ 1.30e-02 \\
    COCOA & \textbf{0.172 (0.002)} & \textbf{0.067 (0.002)} & \textbf{0.037 (0.002)} & \textbf{0.018 (0.002)} & \textbf{0.091 (0.006)} & \textbf{-0.036 (0.013)} \\
    \midrule
    Gradient SHAP &&&&&& \\
    \midrule
    Label-Free & 0.171 (0.004) & 0.067 (0.001) & 0.048 (0.002) & 0.019 (0.002) & 0.053 (0.013) & 0.017 (0.010) \\
    Contrastive Label-Free & 0.173 (0.004) & 0.064 (0.001) & 0.048 (0.002) & 0.019 (0.002) & 0.056 (0.013) & 0.011 (0.007) \\
    Corpus & 0.172 (0.004) & 0.071 (0.001) & 0.047 (0.002) & 0.019 (0.002) & 0.064 (0.011) & 9.77e-03 $\pm$ 1.26e-02 \\
    COCOA & \textbf{0.181 (0.004)} & \textbf{0.062 (0.001)} & \textbf{0.051 (0.002)} & \textbf{0.015 (0.002)} & \textbf{0.081 (0.010)} & \textbf{-0.014 (0.007)} \\
    \midrule
    RISE &&&&&& \\
    \midrule
    Label-Free (RELAX) & 0.175 (0.003) & 0.091 (0.002) & 0.061 (0.002) & 0.023 (0.002) & 0.049 (0.008) & 0.017 (0.002) \\
    Contrastive Label-Free & 0.184 (0.003) & 0.085 (0.002) & 0.062 (0.002) & 0.023 (0.002) & 0.055 (0.006) & -1.08e-03 $\pm$ 3.02e-03 \\
    Corpus & 0.173 (0.003) & 0.094 (0.003) & 0.059 (0.002) & 0.025 (0.002) & 0.044 (0.010) & 0.026 (0.006) \\
    COCOA & \textbf{0.195 (0.003)} & \textbf{0.080 (0.002)} & \textbf{0.068 (0.002)} & \textbf{0.019 (0.002)} & \textbf{0.083 (0.006)} & \textbf{-0.028 (0.008)} \\
    \midrule
    Random & 0.115 (0.002) & 0.114 (0.002) & 0.028 (0.002) & 0.029 (0.002) & 0.038 (0.009) & 0.039 (0.009) \\
    \bottomrule
    \end{tabular}
}
    \label{table:contrastive_corpus_similarity_norm_same_class}
\end{table}

\begin{table}[H]
    \centering
    \caption{Insertion and deletion metrics of contrastive corpus similarity when explicands \textit{do not} belong to the corpus class. Means (95\% confidence intervals) across 5 experiment runs are reported. Higher insertion and lower deletion values indicate better performance, respectively. Each method is a combination of a feature attribution method and an explanation target function (e.g., COCOA under RISE corresponds to feature attributions computed by RISE for the contrastive corpus similarity). Performance results of random attributions (last row) are included as benchmarks.}
    \resizebox{\textwidth}{!}{
    \begin{tabular}{lcccccc}\toprule
    Attribution Method & \multicolumn{2}{c}{Imagenet \& SimCLR} & \multicolumn{2}{c}{CIFAR-10 \& SimSiam} & \multicolumn{2}{c}{MURA \& ResNet} \\
    \cmidrule(lr){2-3} \cmidrule(lr){4-5} \cmidrule(lr){6-7}
    & Insertion ($\uparrow$) & Deletion ($\downarrow$) & Insertion ($\uparrow$) & Deletion ($\downarrow$) & Insertion ($\uparrow$) & Deletion ($\downarrow$) \\
    \midrule
    Integrated Gradients &&&&&& \\
    \midrule
    Label-Free & -0.011 (0.001) & -0.017 (0.001) & -5.34e-03 $\pm$ 1.46e-03 & -3.32e-03 $\pm$ 1.36e-03 & -0.020 (0.009) & -1.50e-04 $\pm$ 8.78e-03 \\
    Contrastive Label-Free & -0.011 (0.001) & -0.017 (0.001) & -5.37e-03 $\pm$ 1.50e-03 & -3.21e-03 $\pm$ 1.36e-03 & -0.035 (0.005) & 0.012 (0.009) \\
    Corpus & -4.94e-03 $\pm$ 1.38e-03 & -0.020 (0.001) & -2.94e-04 $\pm$ 1.78e-03 & -6.37e-03 $\pm$ 1.47e-03 & 0.040 (0.008) & -0.062 (0.009) \\
    COCOA & \textbf{5.24e-03 $\pm$ 1.50e-03} & \textbf{-0.028 (0.001)} & \textbf{1.47e-03 $\pm$ 1.74e-03} & \textbf{-8.79e-03 $\pm$ 1.50e-03} & \textbf{0.048 (0.006)} & \textbf{-0.076 (0.009)} \\
    \midrule
    Gradient SHAP &&&&&& \\
    \midrule
    Label-Free & -9.20e-03 $\pm$ 1.33e-03 & -0.016 (0.001) & -6.27e-03 $\pm$ 1.62e-03 & -3.13e-03 $\pm$ 1.57e-03 & -0.037 (0.008) & 3.50e-03 $\pm$ 1.03e-02 \\
    Contrastive Label-Free & -8.87e-03 $\pm$ 1.35e-03 & -0.017 (0.001) & -6.34e-03 $\pm$ 1.46e-03 & -3.06e-03 $\pm$ 1.56e-03 & -0.035 (0.006) & 3.08e-03 $\pm$ 9.38e-03 \\
    Corpus & -4.00e-03 $\pm$ 1.60e-03 & -0.019 (0.001) & 2.29e-03 $\pm$ 1.91e-03 & -8.20e-03 $\pm$ 1.39e-03 & 0.018 (0.009) & -0.041 (0.007) \\
    COCOA & \textbf{3.86e-03 $\pm$ 1.76e-03} & \textbf{-0.027 (0.001)} & \textbf{5.65e-03 $\pm$ 2.02e-03} & \textbf{-0.012 (0.002)} & \textbf{0.032 (0.010)} & \textbf{-0.067 (0.005)} \\
    \midrule
    RISE &&&&&& \\
    \midrule
    Label-Free (RELAX) & -2.87e-03 $\pm$ 1.52e-03 & -4.05e-03 $\pm$ 1.44e-03 & -6.14e-03 $\pm$ 1.31e-03 & -2.18e-03 $\pm$ 1.53e-03 & -0.047 (0.006) & -0.024 (0.005) \\
    Contrastive Label-Free & -2.87e-03 $\pm$ 1.32e-03 & -4.16e-03 $\pm$ 1.54e-03 & -6.20e-03 $\pm$ 1.38e-03 & -2.16e-03 $\pm$ 1.57e-03 & -0.054 (0.006) & -0.017 (0.003) \\
    Corpus & -1.08e-03 $\pm$ 1.72e-03 & -5.33e-03 $\pm$ 1.25e-03 & 1.32e-03 $\pm$ 1.73e-03 & -8.70e-03 $\pm$ 1.11e-03 & -8.82e-03 $\pm$ 6.31e-03 & -0.063 (0.005) \\
    COCOA & \textbf{8.26e-03 $\pm$ 2.17e-03} & \textbf{-0.015 (0.001)} & \textbf{7.19e-03 $\pm$ 1.43e-03} & \textbf{-0.014 (0.001)} & \textbf{0.015 (0.007)} & \textbf{-0.089 (0.003)} \\
    \midrule
    Random & -0.013 (0.001) & -0.013 (0.001) & -4.10e-03 $\pm$ 1.59e-03 & -4.07e-03 $\pm$ 1.46e-03 & -0.021 (0.007) & -0.021 (0.007) \\
    \bottomrule
    \end{tabular}
}
    \label{table:contrastive_corpus_similarity_norm_diff_class}
\end{table}

\section{Additional quantitative evaluation results with dot product similarity}\label{sec:quant_eval_additional_results}
\begin{table}[H]
    \centering
    \caption{Insertion and deletion metrics of contrastive corpus similarity when explicands belong to the corpus class. \textbf{All explanation target functions are based on dot product instead of cosine similarity.} Means (95\% confidence intervals) across 5 experiment runs are reported. Higher insertion and lower deletion values indicate better performance, respectively. Each method is a combination of a feature attribution method and an explanation target function (e.g., COCOA under RISE corresponds to feature attributions computed by RISE for the contrastive corpus similarity). Performance results of random attributions (last row) are included as benchmarks.}
    \resizebox{\textwidth}{!}{
    \begin{tabular}{lcccccc}\toprule
    Attribution Method & \multicolumn{2}{c}{Imagenet \& SimCLR} & \multicolumn{2}{c}{CIFAR-10 \& SimSiam} & \multicolumn{2}{c}{MURA \& ResNet} \\
    \cmidrule(lr){2-3} \cmidrule(lr){4-5} \cmidrule(lr){6-7}
    & Insertion ($\uparrow$) & Deletion ($\downarrow$) & Insertion ($\uparrow$) & Deletion ($\downarrow$) & Insertion ($\uparrow$) & Deletion ($\downarrow$) \\
    \midrule
    Integrated Gradients &&&&&& \\
    \midrule
    Label-Free & 0.153 (0.002) & 0.076 (0.002) & 0.031 (0.002) & 0.020 (0.002) & 0.058 (0.008) & -1.40e-03 $\pm$ 1.08e-02 \\
    Contrastive Label-Free & 0.157 (0.002) & 0.072 (0.002) & 0.032 (0.003) & 0.019 (0.002) & 0.058 (0.007) & -3.08e-03 $\pm$ 1.29e-02 \\
    Corpus & 0.157 (0.002) & 0.076 (0.002) & 0.031 (0.002) & 0.020 (0.002) & 0.079 (0.008) & -0.018 (0.010) \\
    COCOA & \textbf{0.168 (0.002)} & \textbf{0.068 (0.002)} & \textbf{0.036 (0.002)} & \textbf{0.017 (0.002)} & \textbf{0.091 (0.006)} & \textbf{-0.038 (0.013)} \\
    \midrule
    Gradient SHAP &&&&&& \\
    \midrule
    Label-Free & 0.170 (0.003) & 0.067 (0.001) & 0.045 (0.002) & 0.019 (0.002) & 0.055 (0.013) & 0.013 (0.006) \\
    Contrastive Label-Free & 0.174 (0.004) & 0.064 (0.001) & 0.046 (0.002) & 0.018 (0.002) & 0.056 (0.013) & 0.011 (0.006) \\
    Corpus & 0.172 (0.003) & 0.068 (0.001) & 0.042 (0.002) & 0.019 (0.002) & 0.073 (0.012) & -2.12e-04 $\pm$ 5.80e-03 \\
    COCOA & \textbf{0.181 (0.003)} & \textbf{0.061 (0.001)} & \textbf{0.050 (0.002)} & \textbf{0.015 (0.002)} & \textbf{0.082 (0.010)} & \textbf{-0.014 (0.007)} \\
    \midrule
    RISE &&&&&& \\
    \midrule
    Label-Free & 0.180 (0.003) & 0.087 (0.002) & 0.061 (0.002) & 0.024 (0.002) & 0.053 (0.006) & 7.95e-03 $\pm$ 4.51e-03 \\
    Contrastive Label-Free & 0.187 (0.004) & 0.083 (0.002) & 0.062 (0.002) & 0.023 (0.002) & 0.056 (0.006) & -3.16e-04 $\pm$ 2.35e-03 \\
    Corpus & 0.181 (0.003) & 0.088 (0.002) & 0.058 (0.002) & 0.027 (0.002) & 0.062 (0.006) & 4.62e-03 $\pm$ 6.23e-03 \\
    COCOA & \textbf{0.196 (0.004)} & \textbf{0.078 (0.002)} & \textbf{0.067 (0.002)} & \textbf{0.020 (0.002)} & \textbf{0.084 (0.006)} & \textbf{-0.028 (0.007)} \\
    \midrule
    Random & 0.115 (0.002) & 0.114 (0.002) & 0.028 (0.002) & 0.029 (0.002) & 0.038 (0.009) & 0.039 (0.009) \\
    \bottomrule
    \end{tabular}
}
    \label{table:contrastive_corpus_similarity_unnorm_same_class}
\end{table}

\begin{table}[H]
    \centering
    \caption{Insertion and deletion metrics of contrastive corpus similarity when explicands \textit{do not} belong to the corpus class. \textbf{All explanation target functions are based on dot product instead of cosine similarity.} Means (95\% confidence intervals) across 5 experiment runs are reported. Higher insertion and lower deletion values indicate better performance, respectively. Each method is a combination of a feature attribution method and an explanation target function (e.g., COCOA under RISE corresponds to feature attributions computed by RISE for the contrastive corpus similarity). Performance results of random attributions (last row) are included as benchmarks.}
    \resizebox{\textwidth}{!}{
    \begin{tabular}{lcccccc}\toprule
    Attribution Method & \multicolumn{2}{c}{Imagenet \& SimCLR} & \multicolumn{2}{c}{CIFAR-10 \& SimSiam} & \multicolumn{2}{c}{MURA \& ResNet} \\
    \cmidrule(lr){2-3} \cmidrule(lr){4-5} \cmidrule(lr){6-7}
    & Insertion ($\uparrow$) & Deletion ($\downarrow$) & Insertion ($\uparrow$) & Deletion ($\downarrow$) & Insertion ($\uparrow$) & Deletion ($\downarrow$) \\
    \midrule
    Integrated Gradients &&&&&& \\
    \midrule
    Label-Free & -0.012 (0.001) & -0.015 (0.001) & -4.88e-03 $\pm$ 1.26e-03 & -3.48e-03 $\pm$ 1.48e-03 & -0.039 (0.007) & 0.017 (0.011) \\
    Contrastive Label-Free & -0.012 (0.001) & -0.016 (0.001) & -5.11e-03 $\pm$ 1.39e-03 & -3.49e-03 $\pm$ 1.40e-03 & -0.037 (0.005) & 0.015 (0.008) \\
    Corpus & -7.35e-03 $\pm$ 1.28e-03 & -0.018 (0.001) & -2.53e-03 $\pm$ 1.46e-03 & -5.20e-03 $\pm$ 1.41e-03 & 0.013 (0.008) & -0.027 (0.007) \\
    COCOA & \textbf{4.98e-03 $\pm$ 1.51e-03} & \textbf{-0.028 (0.001)} & \textbf{1.39e-03 $\pm$ 1.90e-03} & \textbf{-8.65e-03 $\pm$ 1.54e-03} & \textbf{0.051 (0.006)} & \textbf{-0.076 (0.007)} \\
    \midrule
    Gradient SHAP &&&&&& \\
    \midrule
    Label-Free & -9.41e-03 $\pm$ 1.33e-03 & -0.016 (0.001) & -6.11e-03 $\pm$ 1.40e-03 & -3.20e-03 $\pm$ 1.62e-03 & -0.040 (0.005) & 5.79e-03 $\pm$ 1.02e-02 \\
    Contrastive Label-Free & -8.91e-03 $\pm$ 1.35e-03 & -0.017 (0.001) & -6.23e-03 $\pm$ 1.37e-03 & -3.07e-03 $\pm$ 1.50e-03 & -0.036 (0.007) & 2.10e-03 $\pm$ 9.37e-03 \\
    Corpus & -5.25e-03 $\pm$ 1.44e-03 & -0.019 (0.001) & -1.79e-03 $\pm$ 1.60e-03 & -6.33e-03 $\pm$ 1.31e-03 & 0.019 (0.012) & -0.050 (0.003) \\
    COCOA & \textbf{3.70e-03 $\pm$ 1.77e-03} & \textbf{-0.026 (0.001)} & \textbf{5.13e-03 $\pm$ 1.72e-03} & \textbf{-0.012 (0.002)} & \textbf{0.032 (0.011)} & \textbf{-0.068 (0.005)} \\
    \midrule
    RISE &&&&&& \\
    \midrule
    Label-Free & -3.08e-03 $\pm$ 1.44e-03 & -4.03e-03 $\pm$ 1.43e-03 & -6.13e-03 $\pm$ 1.37e-03 & -2.31e-03 $\pm$ 1.55e-03 & -0.053 (0.006) & -0.019 (0.005) \\
    Contrastive Label-Free & -2.86e-03 $\pm$ 1.30e-03 & -4.44e-03 $\pm$ 1.56e-03 & -6.16e-03 $\pm$ 1.33e-03 & -2.22e-03 $\pm$ 1.56e-03 & -0.055 (0.007) & -0.018 (0.002) \\
    Corpus & -8.76e-04 $\pm$ 1.64e-03 & -5.55e-03 $\pm$ 1.44e-03 & -1.02e-03 $\pm$ 1.34e-03 & -6.85e-03 $\pm$ 1.14e-03 & -0.018 (0.007) & -0.057 (0.005) \\
    COCOA & \textbf{8.57e-03 $\pm$ 2.10e-03} & \textbf{-0.015 (0.001)} & \textbf{7.07e-03 $\pm$ 1.31e-03} & \textbf{-0.014 (0.001)} & \textbf{0.015 (0.006)} & \textbf{-0.088 (0.005)} \\
    \midrule
    Random & -0.013 (0.001) & -0.013 (0.001) & -4.10e-03 $\pm$ 1.59e-03 & -4.07e-03 $\pm$ 1.46e-03 & -0.021 (0.007) & -0.021 (0.007) \\
    \bottomrule
    \end{tabular}
}
    \label{table:contrastive_corpus_similarity_unnorm_diff_class}
\end{table}

\begin{table}[H]
    \centering
    \caption{Insertion and deletion metrics of corpus majority probability when explicands belong to the corpus class. \textbf{All explanation target functions are based on dot product instead of cosine similarity.} Means (95\% confidence intervals) across 5 experiment runs are reported. Higher insertion and lower deletion values indicate better performance, respectively. Each method is a combination of a feature attribution method and an explanation target function (e.g., COCOA under RISE corresponds to feature attributions computed by RISE for the contrastive corpus similarity). Performance results of random attributions (last row) are included as benchmarks.}
    \resizebox{\textwidth}{!}{
    \begin{tabular}{lcccccc}\toprule
    Attribution Method & \multicolumn{2}{c}{Imagenet \& SimCLR} & \multicolumn{2}{c}{CIFAR-10 \& SimSiam} & \multicolumn{2}{c}{MURA \& ResNet} \\
    \cmidrule(lr){2-3} \cmidrule(lr){4-5} \cmidrule(lr){6-7}
    & Insertion ($\uparrow$) & Deletion ($\downarrow$) & Insertion ($\uparrow$) & Deletion ($\downarrow$) & Insertion ($\uparrow$) & Deletion ($\downarrow$) \\
    \midrule
    Integrated Gradients &&&&&& \\
    \midrule
    Label-Free & 0.364 (0.004) & 0.125 (0.003) & 0.343 (0.014) & 0.237 (0.012) & 0.696 (0.020) & 0.451 (0.018) \\
    Contrastive Label-Free & 0.372 (0.005) & 0.119 (0.003) & 0.354 (0.017) & 0.234 (0.012) & 0.690 (0.022) & 0.452 (0.029) \\
    Corpus & 0.384 (0.004) & 0.126 (0.002) & 0.346 (0.012) & 0.240 (0.010) & 0.767 (0.010) & 0.392 (0.018) \\
    COCOA & \textbf{0.415 (0.006)} & \textbf{0.113 (0.002)} & \textbf{0.379 (0.012)} & \textbf{0.222 (0.009)} & \textbf{0.806 (0.012)} & \textbf{0.325 (0.031)} \\
    \midrule
    Gradient SHAP &&&&&& \\
    \midrule
    Label-Free & 0.408 (0.004) & 0.130 (0.002) & 0.475 (0.011) & 0.235 (0.013) & 0.699 (0.040) & 0.515 (0.018) \\
    Contrastive Label-Free & 0.413 (0.003) & 0.126 (0.002) & 0.487 (0.011) & 0.229 (0.012) & 0.700 (0.041) & 0.510 (0.019) \\
    Corpus & 0.423 (0.004) & 0.130 (0.002) & 0.444 (0.011) & 0.234 (0.008) & 0.762 (0.035) & 0.468 (0.016) \\
    COCOA & \textbf{0.444 (0.006)} & \textbf{0.122 (0.002)} & \textbf{0.502 (0.008)} & \textbf{0.207 (0.008)} & \textbf{0.789 (0.031)} & \textbf{0.417 (0.013)} \\
    \midrule
    RISE &&&&&& \\
    \midrule
    Label-Free & 0.414 (0.007) & 0.145 (0.003) & 0.627 (0.007) & 0.288 (0.004) & 0.724 (0.019) & 0.562 (0.010) \\
    Contrastive Label-Free & 0.434 (0.008) & 0.134 (0.004) & 0.633 (0.008) & 0.280 (0.005) & 0.737 (0.021) & 0.533 (0.011) \\
    Corpus & 0.417 (0.010) & 0.145 (0.005) & 0.592 (0.005) & 0.315 (0.006) & 0.765 (0.011) & 0.542 (0.019) \\
    COCOA & \textbf{0.465 (0.009)} & \textbf{0.120 (0.004)} & \textbf{0.664 (0.007)} & \textbf{0.257 (0.006)} & \textbf{0.844 (0.006)} & \textbf{0.412 (0.025)} \\
    \midrule
    Random & 0.269 (0.003) & 0.268 (0.002) & 0.329 (0.013) & 0.329 (0.010) & 0.624 (0.018) & 0.629 (0.018) \\
    \bottomrule
    \end{tabular}
}
    \label{table:corpus_majority_prob_unnorm_same_class}
\end{table}

\begin{table}[H]
    \centering
    \caption{Insertion and deletion metrics of corpus majority probability when explicands \textit{do not} belong to the corpus class. \textbf{All explanation target functions are based on dot product instead of cosine similarity.} Means (95\% confidence intervals) across 5 experiment runs are reported. Higher insertion and lower deletion values indicate better performance, respectively. Each method is a combination of a feature attribution method and an explanation target function (e.g., COCOA under RISE corresponds to feature attributions computed by RISE for the contrastive corpus similarity). Performance results of random attributions (last row) are included as benchmarks.}
    \resizebox{\textwidth}{!}{
    \begin{tabular}{lcccccc}\toprule
    Attribution Method & \multicolumn{2}{c}{Imagenet \& SimCLR} & \multicolumn{2}{c}{CIFAR-10 \& SimSiam} & \multicolumn{2}{c}{MURA \& ResNet} \\
    \cmidrule(lr){2-3} \cmidrule(lr){4-5} \cmidrule(lr){6-7}
    & Insertion ($\uparrow$) & Deletion ($\downarrow$) & Insertion ($\uparrow$) & Deletion ($\downarrow$) & Insertion ($\uparrow$) & Deletion ($\downarrow$) \\
    \midrule
    Integrated Gradients &&&&&& \\
    \midrule
    Label-Free & 2.74e-04 $\pm$ 9.69e-05 & 5.42e-04 $\pm$ 1.12e-04 & 0.066 (0.003) & 0.078 (0.004) & 0.352 (0.024) & 0.522 (0.029) \\
    Contrastive Label-Free & 2.75e-04 $\pm$ 1.00e-04 & 5.00e-04 $\pm$ 1.07e-04 & 0.064 (0.004) & 0.078 (0.004) & 0.347 (0.024) & 0.529 (0.023) \\
    Corpus & 5.60e-04 $\pm$ 2.54e-04 & 4.13e-04 $\pm$ 8.00e-05 & 0.079 (0.004) & 0.070 (0.004) & 0.529 (0.018) & 0.381 (0.026) \\
    COCOA & \textbf{1.58e-03 $\pm$ 4.99e-04} & \textbf{1.64e-04 $\pm$ 2.97e-05} & \textbf{0.100 (0.005)} & \textbf{0.060 (0.005)} & \textbf{0.657 (0.017)} & \textbf{0.210 (0.024)} \\
    \midrule
    Gradient SHAP &&&&&& \\
    \midrule
    Label-Free & 1.88e-04 $\pm$ 4.25e-05 & 6.01e-04 $\pm$ 1.13e-04 & 0.055 (0.005) & 0.082 (0.006) & 0.343 (0.016) & 0.488 (0.020) \\
    Contrastive Label-Free & 2.03e-04 $\pm$ 3.73e-05 & 5.30e-04 $\pm$ 7.78e-05 & 0.053 (0.004) & 0.082 (0.005) & 0.353 (0.019) & 0.478 (0.019) \\
    Corpus & 4.31e-04 $\pm$ 1.08e-04 & 4.56e-04 $\pm$ 9.36e-05 & 0.081 (0.008) & 0.065 (0.006) & 0.548 (0.036) & 0.303 (0.012) \\
    COCOA & \textbf{1.43e-03 $\pm$ 3.30e-04} & \textbf{1.70e-04 $\pm$ 6.64e-05} & \textbf{0.119 (0.006)} & \textbf{0.047 (0.002)} & \textbf{0.592 (0.025)} & \textbf{0.234 (0.014)} \\
    \midrule
    RISE &&&&&& \\
    \midrule
    Label-Free & 3.64e-04 $\pm$ 8.02e-05 & 5.40e-04 $\pm$ 1.11e-04 & 0.040 (0.003) & 0.079 (0.006) & 0.311 (0.022) & 0.450 (0.033) \\
    Contrastive Label-Free & 3.23e-04 $\pm$ 4.88e-05 & 6.25e-04 $\pm$ 1.01e-04 & 0.040 (0.003) & 0.080 (0.005) & 0.303 (0.026) & 0.466 (0.024) \\
    Corpus & 5.19e-04 $\pm$ 1.06e-04 & 5.69e-04 $\pm$ 2.08e-04 & 0.070 (0.004) & 0.056 (0.003) & 0.452 (0.029) & 0.294 (0.020) \\
    COCOA & \textbf{8.65e-04 $\pm$ 1.57e-04} & \textbf{3.06e-04 $\pm$ 5.43e-05} & \textbf{0.105 (0.005)} & \textbf{0.032 (0.002)} & \textbf{0.593 (0.028)} & \textbf{0.177 (0.016)} \\
    \midrule
    Random & 4.87e-04 $\pm$ 9.50e-05 & 5.03e-04 $\pm$ 9.73e-05 & 0.070 (0.003) & 0.070 (0.004) & 0.406 (0.013) & 0.407 (0.016) \\
    \bottomrule
    \end{tabular}
}
    \label{table:corpus_majority_prob_unnorm_diff_class}
\end{table}

\section{Quantitative evaluation results with model parameter randomization.}

It has been shown that gradient-based feature attribution methods can be undesirably invariant to model parameter randomization \citep{adebayo2018sanity}. We performed additional experiments to show that our empirical evaluation results do not suffer from this issue. We generated COCOA scores with randomized models, where the parameters are randomized by sampling from truncated normal distributions as in \citet{adebayo2018sanity}, and then we evaluated the performance of these COCOA scores. As shown in Tables \ref{table:randomized_contrastive_corpus_similarity_norm_same_class}-\ref{table:randomized_corpus_majority_prob_norm_diff_class}, the performance results of COCOA with randomized models are different from and worse than the performance of COCOA with trained model weights. This suggests that our empirical results indeed have the desirable sensitivity to model parameter randomization.

\begin{table}[H]
    \centering
    \caption{COCOA Insertion and deletion metrics of contrastive corpus similarity when explicands belong to the corpus class. Model parameters of randomized models were randomly sampled from truncated normal distributions. Means (95\% confidence intervals) across 5 experiment runs are reported. Higher insertion and lower deletion values indicate better performance, respectively. Performance results of random attributions (last row) are included as benchmarks.}
    \resizebox{\textwidth}{!}{
    \begin{tabular}{lcccccc}\toprule
    Attribution Method & \multicolumn{2}{c}{Imagenet \& SimCLR} & \multicolumn{2}{c}{CIFAR-10 \& SimSiam} & \multicolumn{2}{c}{MURA \& ResNet} \\
    \cmidrule(lr){2-3} \cmidrule(lr){4-5} \cmidrule(lr){6-7}
    & Insertion ($\uparrow$) & Deletion ($\downarrow$) & Insertion ($\uparrow$) & Deletion ($\downarrow$) & Insertion ($\uparrow$) & Deletion ($\downarrow$) \\
    \midrule
    Integrated Gradients &&&&&& \\
    \midrule
    COCOA (trained model) & \textbf{0.172 (0.002)} & \textbf{0.067 (0.002)} & \textbf{0.037 (0.002)} & \textbf{0.018 (0.002)} & \textbf{0.091 (0.006)} & \textbf{-0.036 (0.013)} \\
    COCOA (randomized model) & 0.148 (0.003) & 0.147 (0.002) & 0.025 (0.003) & 0.026 (0.003) & 0.040 (0.009) & 0.038 (0.010)\\
    \midrule
    Gradient SHAP &&&&&& \\
    \midrule
    COCOA (trained model) & \textbf{0.181 (0.004)} & \textbf{0.062 (0.001)} & \textbf{0.051 (0.002)} & \textbf{0.015 (0.002)} & \textbf{0.081 (0.010)} & \textbf{-0.014 (0.007)} \\
    COCOA (randomized model) & 0.148 (0.003) & 0.147 (0.002) & 0.026 (0.002) & 0.026 (0.003) & 0.041 (0.008) & 0.039 (0.011)\\
    \midrule
    RISE &&&&&& \\
    \midrule
    COCOA (trained model) & \textbf{0.195 (0.003)} & \textbf{0.080 (0.002)} & \textbf{0.068 (0.002)} & \textbf{0.019 (0.002)} & \textbf{0.083 (0.006)} & \textbf{-0.028 (0.008)} \\
    COCOA (randomized model) & 0.147 (0.004) & 0.149 (0.003) & 0.042 (0.002) & 0.041 (0.002) & 0.033 (0.006) & 0.036 (0.005)\\
    \midrule
    Random & 0.115 (0.002) & 0.114 (0.002) & 0.028 (0.002) & 0.029 (0.002) & 0.038 (0.009) & 0.039 (0.009) \\
    \bottomrule
    \end{tabular}
}
    \label{table:randomized_contrastive_corpus_similarity_norm_same_class}
\end{table}

\begin{table}[H]
    \centering
    \caption{COCOA Insertion and deletion metrics of contrastive corpus similarity when explicands \textit{do not} belong to the corpus class. Model parameters of randomized models were randomly sampled from truncated normal distributions. Means (95\% confidence intervals) across 5 experiment runs are reported. Higher insertion and lower deletion values indicate better performance, respectively. Performance results of random attributions (last row) are included as benchmarks.}
    \resizebox{\textwidth}{!}{
    \begin{tabular}{lcccccc}\toprule
    Attribution Method & \multicolumn{2}{c}{Imagenet \& SimCLR} & \multicolumn{2}{c}{CIFAR-10 \& SimSiam} & \multicolumn{2}{c}{MURA \& ResNet} \\
    \cmidrule(lr){2-3} \cmidrule(lr){4-5} \cmidrule(lr){6-7}
    & Insertion ($\uparrow$) & Deletion ($\downarrow$) & Insertion ($\uparrow$) & Deletion ($\downarrow$) & Insertion ($\uparrow$) & Deletion ($\downarrow$) \\
    \midrule
    Integrated Gradients &&&&&& \\
    \midrule
    COCOA (trained model) & \textbf{5.24e-03 $\pm$ 1.50e-03} & \textbf{-0.028 (0.001)} & \textbf{1.47e-03 $\pm$ 1.74e-03} & \textbf{-8.79e-03 $\pm$ 1.50e-03} & \textbf{0.048 (0.006)} & \textbf{-0.076 (0.009)} \\
    COCOA (randomized model) & -1.98e-03 $\pm$ 1.42e-03 & -4.53e-03 $\pm$ 1.62e-03 & -4.01e-03 $\pm$ 1.64e-03 & -3.95e-03 $\pm$ 1.70e-03 & -0.010 (0.010) & -0.013 (0.008)\\
    \midrule
    Gradient SHAP &&&&&& \\
    \midrule
    COCOA (trained model) & \textbf{3.86e-03 $\pm$ 1.76e-03} & \textbf{-0.027 (0.001)} & \textbf{5.65e-03 $\pm$ 2.02e-03} & \textbf{-0.012 (0.002)} & \textbf{0.032 (0.010)} & \textbf{-0.067 (0.005)} \\
    COCOA (randomized model) & -1.99e-03 $\pm$ 1.37e-03 & -4.42e-03 $\pm$ 1.75e-03 & -3.91e-03 $\pm$ 1.56e-03 & -3.82e-03 $\pm$ 1.70e-03 & -0.014 (0.009) & -0.017 (0.009)\\
    \midrule
    RISE &&&&&& \\
    \midrule
    COCOA (trained model) & \textbf{8.26e-03 $\pm$ 2.17e-03} & \textbf{-0.015 (0.001)} & \textbf{7.19e-03 $\pm$ 1.43e-03} & \textbf{-0.014 (0.001)} & \textbf{0.015 (0.007)} & \textbf{-0.089 (0.003)} \\
    COCOA (randomized model) & -2.08e-03 $\pm$ 1.43e-03 & -4.42e-03 $\pm$ 1.64e-03 & -3.57e-03 $\pm$ 1.40e-03 & -4.61e-03 $\pm$ 1.61e-03 & -0.040 (0.008) & -0.036 (0.005)\\
    \midrule
    Random & -0.013 (0.001) & -0.013 (0.001) & -4.10e-03 $\pm$ 1.59e-03 & -4.07e-03 $\pm$ 1.46e-03 & -0.021 (0.007) & -0.021 (0.007) \\
    \bottomrule
    \end{tabular}
}
    \label{table:randomized_contrastive_corpus_similarity_norm_diff_class}
\end{table}

\begin{table}[H]
    \centering
    \caption{COCOA Insertion and deletion metrics of corpus majority probability when explicands belong to the corpus class. Model parameters of randomized models were randomly sampled from truncated normal distributions. Means (95\% confidence intervals) across 5 experiment runs are reported. Higher insertion and lower deletion values indicate better performance, respectively. Performance results of random attributions (last row) are included as benchmarks.}
    \resizebox{\textwidth}{!}{
    \begin{tabular}{lcccccc}\toprule
    Attribution Method & \multicolumn{2}{c}{Imagenet \& SimCLR} & \multicolumn{2}{c}{CIFAR-10 \& SimSiam} & \multicolumn{2}{c}{MURA \& ResNet} \\
    \cmidrule(lr){2-3} \cmidrule(lr){4-5} \cmidrule(lr){6-7}
    & Insertion ($\uparrow$) & Deletion ($\downarrow$) & Insertion ($\uparrow$) & Deletion ($\downarrow$) & Insertion ($\uparrow$) & Deletion ($\downarrow$) \\
    \midrule
    Integrated Gradients &&&&&& \\
    \midrule
    COCOA (trained model) & \textbf{0.422 (0.006)} & \textbf{0.119 (0.003)} & \textbf{0.386 (0.012)} & \textbf{0.230 (0.011)} & \textbf{0.807 (0.013)} & \textbf{0.330 (0.030)} \\
    COCOA (randomized model) & 0.321 (0.005) & 0.317 (0.007) & 0.299 (0.011) & 0.303 (0.009) & 0.603 (0.017) & 0.598 (0.019)\\
    \midrule
    Gradient SHAP &&&&&& \\
    \midrule
    COCOA (trained model) & \textbf{0.445 (0.003)} & \textbf{0.123 (0.002)} & \textbf{0.508 (0.007)} & \textbf{0.211 (0.008)} & \textbf{0.788 (0.030)} & \textbf{0.419 (0.013)} \\
    COCOA (randomized model) & 0.321 (0.005) & 0.315 (0.007) & 0.299 (0.008) & 0.305 (0.009) & 0.622 (0.021) & 0.613 (0.019)\\
    \midrule
    RISE &&&&&& \\
    \midrule
    COCOA (trained model) & \textbf{0.456 (0.009)} & \textbf{0.126 (0.001)} & \textbf{0.663 (0.006)} & \textbf{0.256 (0.006)} & \textbf{0.840 (0.009)} & \textbf{0.415 (0.025)} \\
    COCOA (randomized model) & 0.321 (0.008) & 0.322 (0.005) & 0.455 (0.008) & 0.433 (0.015) & 0.649 (0.025) & 0.662 (0.007)\\
    \midrule
    Random & 0.269 (0.003) & 0.268 (0.002) & 0.329 (0.013) & 0.329 (0.010) & 0.624 (0.018) & 0.629 (0.018) \\
    \bottomrule
    \end{tabular}
}
    \label{table:randomized_corpus_majority_prob_norm_same_class}
\end{table}

\begin{table}[H]
    \centering
    \caption{COCOA Insertion and deletion metrics of corpus majority probability when explicands \textit{do not} belong to the corpus class. Model parameters of randomized models were randomly sampled from truncated normal distributions. Means (95\% confidence intervals) across 5 experiment runs are reported. Higher insertion and lower deletion values indicate better performance, respectively. Performance results of random attributions (last row) are included as benchmarks.}
    \resizebox{\textwidth}{!}{
    \begin{tabular}{lcccccc}\toprule
    Attribution Method & \multicolumn{2}{c}{Imagenet \& SimCLR} & \multicolumn{2}{c}{CIFAR-10 \& SimSiam} & \multicolumn{2}{c}{MURA \& ResNet} \\
    \cmidrule(lr){2-3} \cmidrule(lr){4-5} \cmidrule(lr){6-7}
    & Insertion ($\uparrow$) & Deletion ($\downarrow$) & Insertion ($\uparrow$) & Deletion ($\downarrow$) & Insertion ($\uparrow$) & Deletion ($\downarrow$) \\
    \midrule
    Integrated Gradients &&&&&& \\
    \midrule
    COCOA (trained model) & \textbf{1.69e-03 $\pm$ 5.07e-04} & \textbf{1.55e-04 $\pm$ 2.21e-05} & \textbf{0.099 (0.004)} & \textbf{0.059 (0.005)} & \textbf{0.647 (0.017)} & \textbf{0.213 (0.030)} \\
    COCOA (randomized model) & 5.52e-04 $\pm$ 9.43e-05 & 4.54e-04 $\pm$ 3.29e-05 & 0.072 (0.004) & 0.072 (0.005) & 0.434 (0.018) & 0.424 (0.006)\\
    \midrule
    Gradient SHAP &&&&&& \\
    \midrule
    COCOA (trained model) & \textbf{1.56e-03 $\pm$ 4.32e-04} & \textbf{1.67e-04 $\pm$ 5.66e-05} & \textbf{0.122 (0.008)} & \textbf{0.045 (0.003)} & \textbf{0.592 (0.025)} & \textbf{0.236 (0.012)} \\
    COCOA (randomized model) & 5.38e-04 $\pm$ 9.44e-05 & 4.58e-04 $\pm$ 4.48e-05 & 0.071 (0.004) & 0.073 (0.005) & 0.427 (0.022) & 0.415 (0.016)\\
    \midrule
    RISE &&&&&& \\
    \midrule
    COCOA (trained model) & \textbf{9.51e-04 $\pm$ 2.58e-04} & \textbf{3.60e-04 $\pm$ 1.40e-04} & \textbf{0.107 (0.007)} & \textbf{0.031 (0.002)} & \textbf{0.590 (0.027)} & \textbf{0.181 (0.014)} \\
    COCOA (randomized model) & 5.30e-04 $\pm$ 1.14e-04 & 4.59e-04 $\pm$ 4.66e-05 & 0.070 (0.003) & 0.057 (0.004) & 0.364 (0.035) & 0.375 (0.015)\\
    \midrule
    Random & 4.87e-04 $\pm$ 9.50e-05 & 5.03e-04 $\pm$ 9.73e-05 & 0.070 (0.003) & 0.070 (0.004) & 0.406 (0.013) & 0.407 (0.016) \\
    \bottomrule
    \end{tabular}
}
    \label{table:randomized_corpus_majority_prob_norm_diff_class}
\end{table}

\newpage
\section{Additional results for understanding data augmentations in SimCLR}\label{sec:data_aug_additional_results}

\begin{figure}[hb]
\centering
\includegraphics[width=\textwidth]{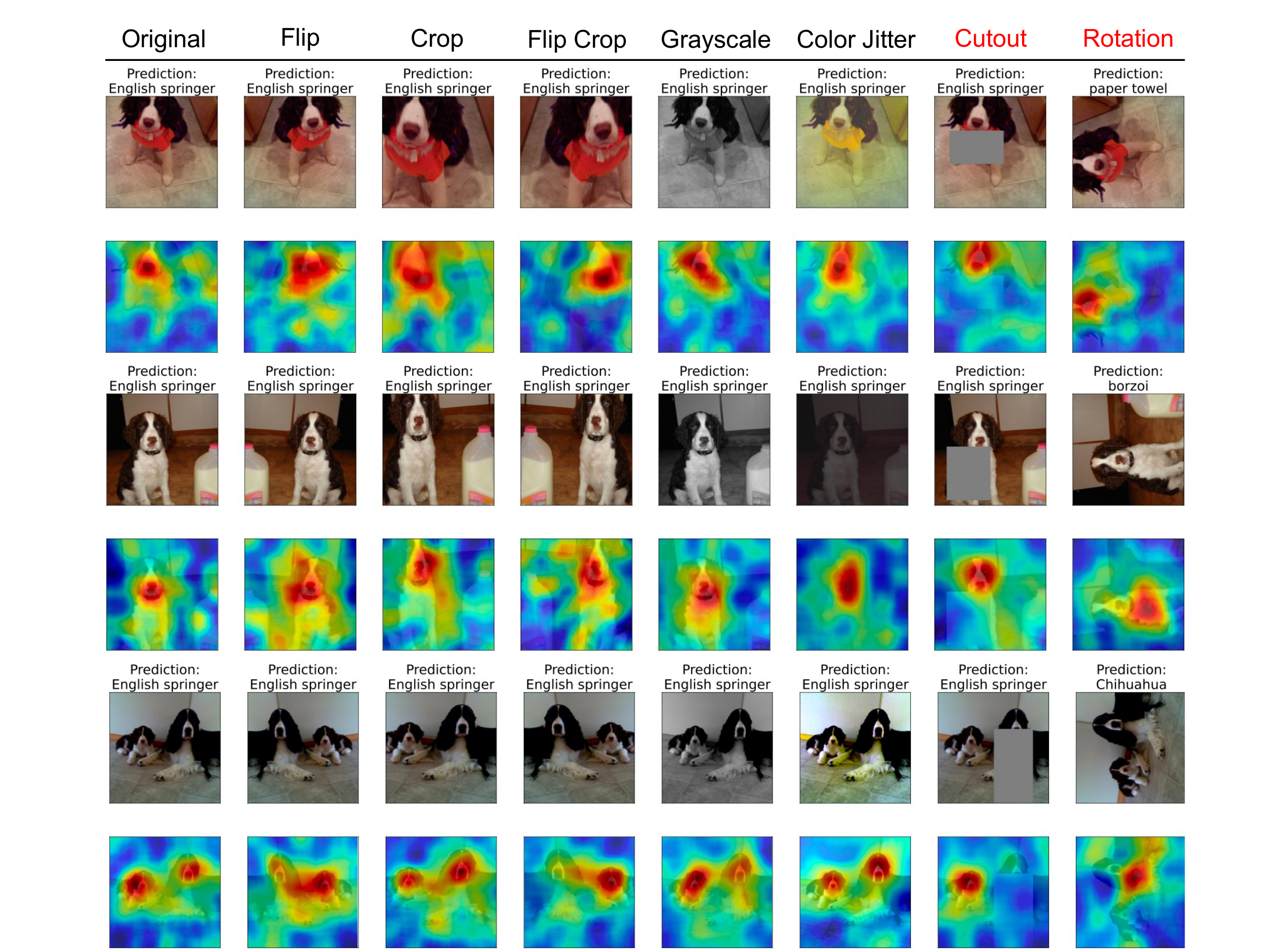}
\caption{Original version and augmentations of English springer images with their class predictions (top rows), along with the corresponding COCOA attributions (red for higher values and blue for lower values) with each original image as the corpus and random
images as the foil (bottom rows). \textcolor{red}{Cutout} and \textcolor{red}{rotation} are not included in SimCLR training.}
\label{fig:augmentation_english_springer}
\end{figure}

\begin{figure}[ht]
\centering
\includegraphics[width=\textwidth]{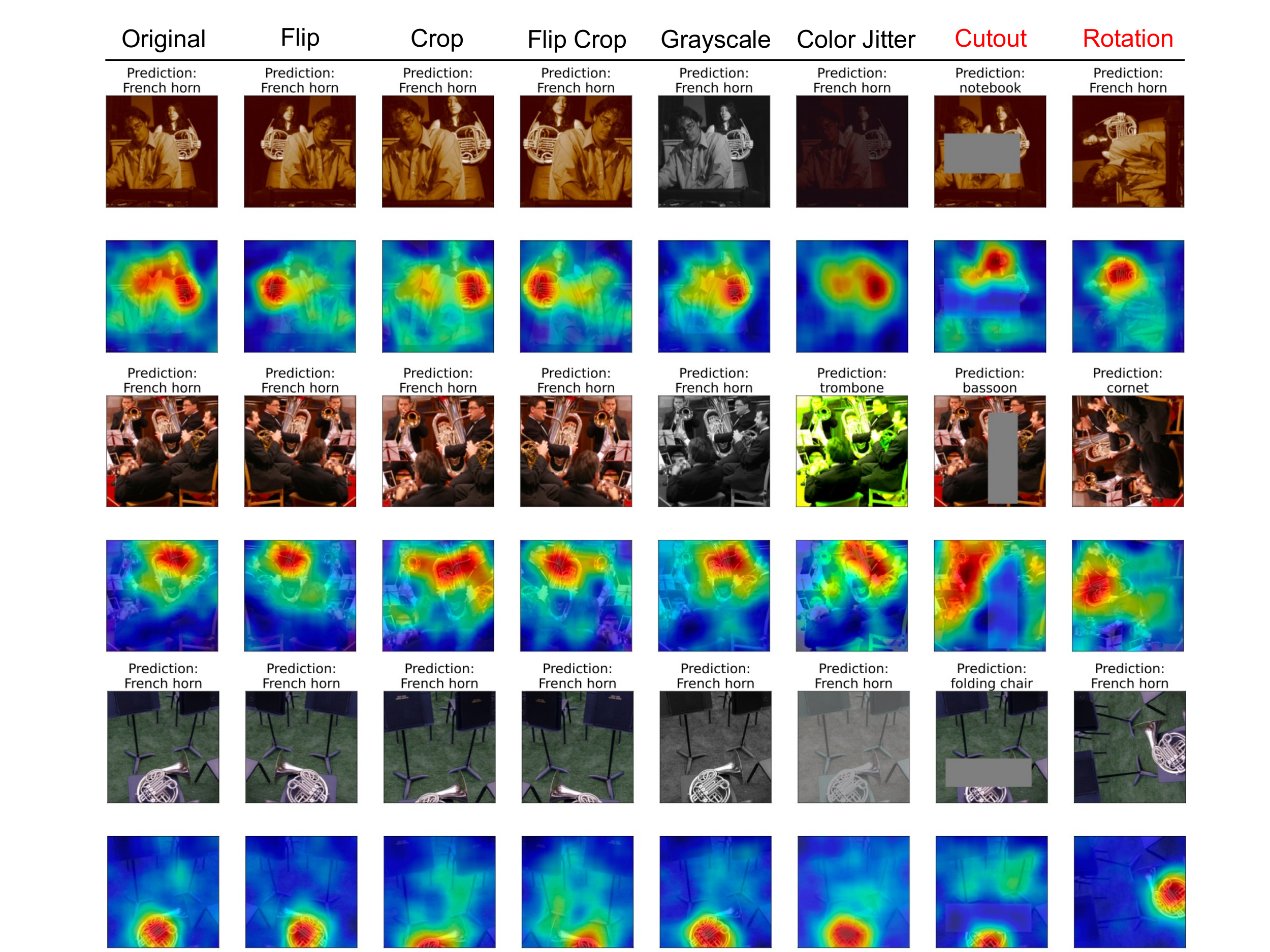}
\caption{Original version and augmentations of French horn images with their class predictions (top rows), along with the corresponding COCOA attributions (red for higher values and blue for lower values) with each original image as the corpus and random
images as the foil (bottom rows). \textcolor{red}{Cutout} and \textcolor{red}{rotation} are not included in SimCLR training.}
\label{fig:augmentation_frensh_horn}
\end{figure}

\begin{figure}[ht]
\centering
\includegraphics[width=\textwidth]{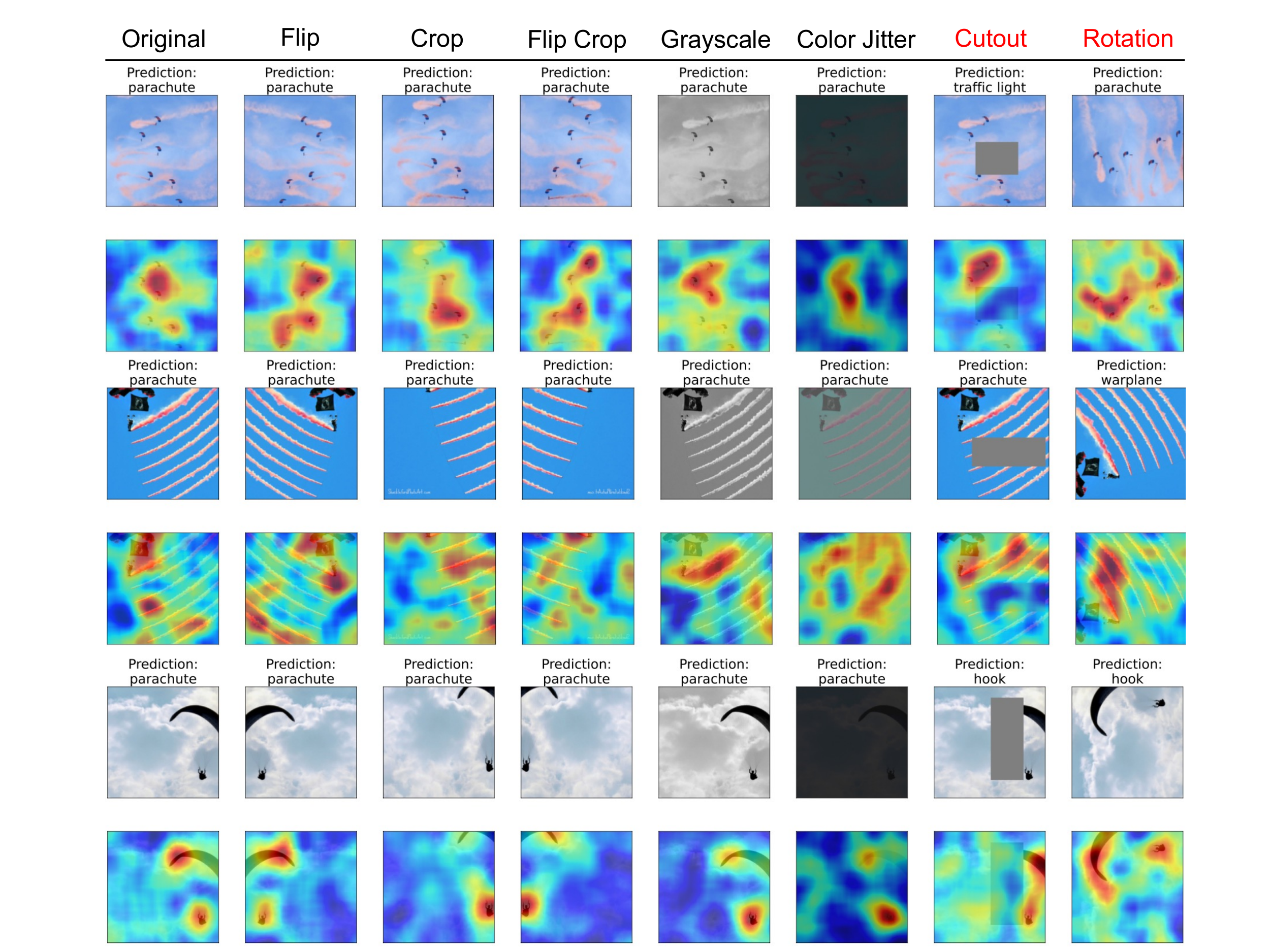}
\caption{Original version and augmentations of parachute images with their class predictions (top rows), along with the corresponding COCOA attributions (red for higher values and blue for lower values) with each original image as the corpus and random
images as the foil (bottom rows). \textcolor{red}{Cutout} and \textcolor{red}{rotation} are not included in SimCLR training.}
\label{fig:augmentation_parashute}
\end{figure}

\end{document}